\newtheorem{assumption}{\hspace{0pt}\bf AS\hspace{-0.05cm}}
\newtheorem{lemma}{\hspace{0pt}\bf Lemma}
\newtheorem{proposition}{\hspace{0pt}\bf Proposition}
\newtheorem{theorem}{\hspace{0pt}\bf Theorem}
\newtheorem{definition}{\hspace{0pt}\bf Definition}
\def\Tr{\mathsf{T}}
\def\Hr{\mathsf{H}}
\title{Graphon and graph neural network stability}
\name{Luana Ruiz, Zhiyang Wang and Alejandro Ribeiro\thanks{Support by USA NSF CCF 1717120, ARL DCIST CRA W911NF-17-2-0181. 
		}
}
\address{Department of Electrical and Systems Engineering, University of Pennsylvania, Philadelphia, USA
}
\begin{document}
\ninept
\maketitle
%


\begin{abstract}
Graph neural networks (GNNs) are learning architectures that rely on knowledge of the graph structure to generate meaningful representations of large-scale network data. GNN stability is thus important as in real-world scenarios there are typically uncertainties associated with the graph. We analyze GNN stability using kernel objects called graphons. Graphons are both limits of convergent graph sequences and generating models for deterministic and stochastic graphs. Building upon the theory of graphon signal processing, we define graphon neural networks and analyze their stability to graphon perturbations. We then extend this analysis by interpreting the graphon neural network as a generating model for GNNs on deterministic and stochastic graphs instantiated from the original and perturbed graphons. We observe that GNNs are stable to graphon perturbations with a stability bound that decreases asymptotically with the size of the graph. This asymptotic behavior is further demonstrated in an experiment of movie recommendation.


\end{abstract}
\begin{keywords}
graph neural networks, stability, graphons, graph signal processing
\end{keywords}
\section{Introduction}
\label{sec:intro}

Graph neural networks (GNNs) are layered information processing architectures alternating graph convolutional filters and pointwise nonlinearities \cite{Scarselli08-TheGNNmodel,Gama19-Architectures,Kipf17-GCN,Defferrard17-ChebNets}. Graph convolutional filters, or graph convolutions for short, are 
shift-and-sum operations akin to the time and spatial convolutions of convolutional neural networks (CNNs), but parametrized so as to couple the data with the graph through learnable weights. The coupling between graph and data is achieved by a graph matrix representation called graph shift operator (GSO)---an umbrella term for matrices that, like the adjacency or the Laplacian, encode the sparsity pattern of the graph \cite{ortega2018graph}. Thus, when implementing graph convolutions and GNNs, the underlying assumption is that the graph structure is fully known.

In practice, however, this is not always the case. In networks that need to be measured, uncertainties around the graph structure usually come from measurement noise \cite{moore2004robust}. In physical networks, they can also be caused by faulty sensors or faded communication channels \cite{5727972}. These issues are even more pronounced in large and dynamic graphs \cite{adar2007managing,chen2012vulnerability}. Two important research questions that arise are then: how is the performance of GNNs affected by changes of the underlying graph? and, what is the effect of the graph size on their stability to graph perturbations?

We answer these questions by drawing on graphons.
Graphons are kernel objects that are both limits of convergent graph sequences and generative models for graphs \cite{lovasz2006limits,borgs2008convergent, gao2018graphon,avella2018centrality}. From their graph limit interpretation, we can think of them as objects identifying families of graphs that are structurally similar in the sense that they share certain densities of motifs also shared by the graphon. Graphs associated with different graphon families are structurally different and can be seen as perturbations of one another. Hence, 
 we focus on perturbations of graphons and use their generative model interpretation to instantiate a graph and its perturbation from the original and the perturbed graphons. 
 

To carry out our stability analysis, we take a step-by-step approach relying on graphon signal processing \cite{ruiz2020graphon,morency2017signal}. This theoretical framework defines graphon signals and graphon filters as abstractions of graph data and graph filters in the limit, which we use to define graphon neural networks (WNNs) and study their stability in Section \ref{sec:wnns}. From the WNN stability result (Theorem \ref{thm:wnn_stab}), we prove stability of GNNs supported on deterministic graphs instantiated from the graphon (Theorem \ref{thm:det_graph_stab}) using the GNN-WNN approximation result from \cite[Theorem 1]{ruiz2020wnn}. This result is then extended to stochastic graphs in Theorem \ref{thm:stab_stochastic}. The main takeaways from these theorems are that: (i) there exists a trade-off between the stability and discriminability of GNNs in the form of a restriction on the passing band of the graph convolutional filters, and (ii) stability increases asymptotically with the size of the graph. To conclude, we illustrate GNN stability in a movie recommendation example in Section \ref{sec:sims}.

Most of the GNN stability literature considers perturbations acting directly on the graph. Absolute and relative graph perturbations are considered in \cite{Gama19-Stability,Gama19-Scattering}, which study the stability of GNNs and graph scattering transforms respectively, while \cite{ZouLerman19-Scattering} analyzes the stability of graph scattering transforms to permutations and perturbations of the spectra.
Other related work includes studies of the transferability of GNNs on graphs instantiated from graphons \cite{ruiz2020wnn} and generic topological spaces \cite{levie2019transferability}. Most in line with this paper, \cite{keriven2020convergence} analyzes the stability of GNNs on standard random graph models, but it does not make the spectral considerations that lead to the key stability-discriminability trade-off we observe.




\section{Preliminary Definitions}
\label{sec:prelims}

In order to analyze stability to graphon perturbations, we start by reviewing GNNs and the graphon signal processing framework.


\subsection{Graph neural networks}
 
Graphs are triplets $\bbG=(\ccalV,\ccalE,\ccalW)$, where $\ccalV$, $|\ccalV|=n$, is the set of nodes, $\ccalE \subseteq \ccalV \times \ccalV$ is the set of edges, and $\ccalW: \ccalE \to \reals$ is a function assigning weights to the edges of $\bbG$. We focus on undirected graphs, so that $\ccalW(i,j)=\ccalW(j,i)$. Data supported on networks is modeled as graph signals $\bbx \in \reals^n$, where $[\bbx]_i = x_i$ corresponds to the value of the data at node $i$ \cite{shuman13-mag,ortega2018graph}. Graph operations on graph signals are parametrized by the graph shift operator (GSO) $\bbS \in \reals^{n \times n}$. The GSO is a matrix that encodes the sparsity pattern of $\bbG$ by satisfying $[\bbS]_{ij} = s_{ij} \neq 0$ if and only if $i=j$ or $(i,j) \in \ccalE$. Examples of GSOs include the graph adjacency matrix $[\bbA]_{ij} = \ccalW(i,j)$ \cite{sandryhaila13-dspg}, the graph Laplacian $\bbL=\diag (\bbA\boldsymbol{1})-\bbA$ \cite{shuman13-mag}, and their normalized counterparts. In this paper, we consider $\bbS=\bbA$.

When applied to a graph signal, the GSO effectively shifts or diffuses graph data over the network edges. To see this, note that, at each node $i$, $[\bbS\bbx]_i = \sum_{j | (i,j) \in \ccalE} s_{ij}x_j$. In other words, nodes $j$ shift their data values $x_j$, weighted by the proximity measure $s_{ij}$, to neighbors $i$. Equipped with this notion of shift, we define the graph convolution as a weighted sum of data shifted to neighbors at most $K-1$ hops away. Explicitly,
\begin{equation} \label{eqn:graph_convolution}
\bbh *_{\bbS} \bbx = \sum_{k=0}^{K-1} h_k \bbS^k \bbx = \bbH(\bbS) \bbx \text{.}
\end{equation}
The vector $\bbh = [h_0, \ldots h_{K-1}]$ groups the filter coefficients and $*_{\bbS}$ denotes the convolution operation with GSO $\bbS$ \cite{moura2018convolution,segarra17-linear}.

Because $\bbS$ is symmetric, it can be diagonalized as $\bbS = \bbV \bbLam \bbV^{\Hr}$, where $\bbLam$ contains the graph eigenvalues and $\bbV$ the graph eigenvectors. The eigenvector basis $\bbV$, which we call the graph spectral basis, is orthonormal. Substituting $\bbS=\bbV\bbLam\bbV^\Hr$ in \eqref{eqn:graph_convolution} and calculating the change of basis $\bbV^\Hr \bbH(\bbS) \bbx$, we get
\begin{equation} \label{eqn:spec-lsi-gf}
\bbV^\Hr \bbH(\bbS)\bbx = \sum_{k=0}^{K-1} h_k \bbLam^k \bbV^\Hr \bbx = h(\bbLam) \bbV^\Hr \bbx \text{.}
\end{equation}
Thus, the graph convolution $\bbH(\bbS)$ has spectral representation $h(\lambda)= \sum_{k=0}^{K-1} h_k \lambda^k$, which only depends on $\bbh$ and on the eigenvalues of $\bbG$.

A GNN is made up of layers consisting of a bank of filters like the one in \eqref{eqn:graph_convolution} followed by a pointwise nonlinear activation function. Denoting the activation function $\sigma$, the $\ell$th layer of a GNN can be written as
\begin{equation} \label{eqn:gcn_layer}
\bbx^f_{\ell} = \sigma \left( \sum_{g=1}^{F_{\ell-1}} \bbh_{\ell}^{fg} *_{\bbS} \bbx_{\ell-1}^g \right)
\end{equation}
where $\bbx_{\ell}^f$ denotes the $f$th feature of layer $\ell$ for $1 \leq f \leq F_{\ell}$ and $1 \leq \ell \leq L$. At each layer $\ell$, $F_{\ell-1}$ and $F_\ell$ are the numbers of input and output features respectively. The input features of the first layer, denoted $\bbx^g_0$, are the input data $\bbx^g$ for $1 \leq g \leq F_0$. The GNN output is $\bby^f = \bbx_L^f$. For simplicity, this GNN can be succinctly represented as the map $\bby = \bbPhi(\ccalH; \bbS; \bbx)$. The set $\ccalH$ groups the learnable parameters of all layers and is defined as $\ccalH = \{\bbh_{\ell}^{fg}\}_{\ell,f,g}$.



\subsection{Graphon signal processing}

A graphon is a bounded symmetric measurable kernel $\bbW: [0,1]^2 \to [0,1]$ which can be interpreted as an undirected graph with an uncountable number of nodes. This can be seen by associating points $u_i, u_j \in [0,1]$ with graph nodes $i,j$, and assigning weights $\bbW(u_i,u_j)$ to edges $(i,j)$. 
More formally, graphons are both limit objects of convergent graph sequences and generative models for deterministic and random graphs. The first interpretation is important because it allows grouping graphs in \textit{graph families} associated with different graphons. The second provides a way of instantiating or sampling graphs belonging to a graphon family.

Consider arbitrary unweighted and undirected graphs $\bbF = (\ccalV', \ccalE')$. We refer to a given $\bbF$ as a ``graph motif''.
Homomorphisms of $\bbF$ into $\bbG = (\ccalV,\ccalE,\ccalW)$ are defined as adjacency preserving maps in which~$(i,j)\in \ccalE'$ implies $(i,j)\in \ccalE$. Note that these maps are only a fraction of the total number of ways in which the nodes of $\bbF$ can be mapped into $\bbG$. Therefore, we can define a density of homomorphisms $t(\bbF,\bbG)$, which quantifies the frequency of the motif $\bbF$ in the graph $\bbG$. Homomorphisms of graphs into graphons are defined analogously, so we can also define the density of homomorphisms of the graph $\bbF$ into the graphon $\bbW$---denoted $t(\bbF,\bbW)$. A sequence $\{\bbG_n\}$ is said to converge to the graphon $\bbW$ if, for all finite, unweighted and undirected graphs $\bbF$,
\begin{equation} \label{eqn_graphon_convergence}
   \lim_{n\to\infty} t(\bbF,\bbG_n) = t(\bbF,\bbW).
\end{equation}

It can be shown that every graphon is the limit object of a convergent graph sequence, and every convergent graph sequence converges to a graphon \cite[Chapter 11]{lovasz2012large}. Indeed, convergent graph sequences can be obtained by drawing on the generative model interpretation of graphons. The first type of graphs that can be generated from graphons are \textit{deterministic graphs} $\bar{\bbG}_n$. A deterministic graph on $n$ nodes is constructed by building a regular partition $u_i = (i-1)/n$ of $[0,1]$ for $1 \leq i \leq n$ and attaching each point $u_i$ to a node $i$. The GSO $\bar{\bbS}_n$ is then defined as 
\begin{equation} \label{eqn:deterministic_graph}
[\bar{\bbS}_n]_{ij} = \bar{s}_{ij} = \bbW(u_i,u_j)
\end{equation}
i.e., edges $(i,j)$ have weight $\bbW(u_i,u_j)$.

The second type of graphs that can be generated from a graphon are \textit{stochastic graphs} $\bbG_n$. These are unweighted graphs (i.e., whose edge weights are either $0$ or $1$) obtained by sampling edges $(i,j)$ as Bernoulli random variables with probability $[\bar{\bbS}_n]_{ij}=\bar{s}_{ij}$. Explicitly,
\begin{equation} \label{eqn:stochastic_graph}
[{\bbS}_n]_{ij} \sim \text{Bernoulli}([\bar{\bbS}_n]_{ij})\text{.}
\end{equation}
Also known as $\bbW$-random graphs, these graphs converge to the graphon $\bbW$ with probability $1$ \cite[Chapter 10.1]{lovasz2012large}. 

%

%
%

In the same way that we can instantiate graphs from graphons, the \textit{graphon induced by a graph} $\bbG_n$ can always be defined. For instance, consider the deterministic graph $\bar{\bbG}_n$ and let $I_i = [(i-1)/n,i/n]$, $1 \leq i \leq n$, be a regular partition of the unit interval. The graphon induced by $\bar{\bbG}_n$ is denoted $\bbW_{\bar{\bbG}_n} = \bar{\bbW}_n$ and defined as
\begin{equation} \label{eqn:graphon_ind}
\bar{\bbW}_{n}(u,v) = [\bar{\bbS}_n]_{ij}\ \mbI(u \in I_i) \times \mbI(v \in I_j)
\end{equation}
where $\mbI$ is the indicator function and $\times$ the Cartesian product.
The graphon $\bbW_{\bbG_n}=\bbW_n$ induced by the stochastic graph $\bbG_n$ can be defined analogously.

\subsubsection{Graphon signals and graphon convolutions}

Graphon signals are defined as functions $X \in L_2([0,1])$. They can be seen as limits of graph signals supported on graphs converging to a graphon \cite{ruiz2019graphon}, or as generating models for graph signals $\bbx_n$ supported on the deterministic graph $\bar{\bbG}_n$ \eqref{eqn:deterministic_graph} or on the stochastic graph $\bbG_n$ \eqref{eqn:stochastic_graph} and given by
\begin{equation} \label{eqn:sample_graph_signal}
[\bbx_n]_i = X(u_i)
\end{equation}
where $u_i=(i-1)/n$ for $1 \leq i \leq n$.
Graphon signals can also be induced by graph signals. The graphon signal induced by a graph signal $\bbx_n \in \reals^n$ is given by \cite{ruiz2019graphon}
\begin{equation} \label{eqn:graphon_signal_ind}
X_n(u) = [\bbx_n]_i\ \mbI(u \in I_i)
\end{equation}
where $I_i = [(i-1)/n,i/n]$, $1 \leq i \leq n$, is the regular partition of $[0,1]$.

The integral operator with kernel $\bbW$ defines a basic operation for graphon signals. Explicitly, this operator is given by
\begin{equation} \label{eqn:graphon_shift}
(T_\bbW X)(v) := \int_0^1 \bbW(u,v)X(u)du
\end{equation}
and we call it graphon shift operator (WSO) in analogy with the GSO.
Since $\bbW$ is bounded and symmetric, $T_\bbW$ is a self-adjoint Hilbert-Schmidt (HS) operator. Hence, the graphon can be expressed in the operator's spectral basis as $\bbW(u,v) = \sum_{i \in \mbZ\setminus \{0\}} \lambda_i \varphi_i(u)\varphi_i(v)$ and we can rewrite $T_\bbW$ as
\begin{equation} \label{eqn:graphon_spectra}
(T_\bbW X)(v) = \sum_{i \in \mbZ\setminus \{0\}}\lambda_i \varphi_i(v) \int_0^1 \varphi_i(u)X(u)du \text{.}
\end{equation}
The eigenvalues $\lambda_i$, $i \in \mbZ\setminus \{0\}$, are ordered as $1 \geq \lambda_1 \geq \lambda_2 \geq \ldots \geq \ldots \geq \lambda_{-2} \geq \lambda_{-1} \geq -1$ following their sign and in decreasing order of absolute value. As $|i| \to \infty$, they accumulate around 0 due to the spectral theorem for self-adjoint HS operators \cite[Theorem 3, Chapter 28]{lax02-functional}. 

Analogously to graph convolutions, graphon convolutions are defined as shift-and-sum operations in which each shift corresponds to an application of the WSO. The graphon convolution $\bbh *_{\bbW} X$ is given by 
\begin{align}\begin{split} \label{eqn:lsi-wf}
&\bbh *_{\bbW} X = \sum_{k=0}^{K-1} h_k (T_{\bbW}^{(k)} X)(v) = (T_\bbH X)(v) \quad \mbox{with} \\
&(T_{\bbW}^{(k)}X)(v) = \int_0^1 \bbW(u,v)(T_\bbW^{(k-1)} X)(u)du
\end{split}\end{align}
where $T_{\bbW}^{(0)} = \bbI$ is the identity operator and $\bbh$ is the vector of filter coefficients $\bbh = [h_0, \ldots, h_{K-1}]$ \cite{ruiz2020graphon}. Alternatively, we can use the graphon spectral decomposition to write $T_\bbH$ as
\begin{align} \label{eqn:spec-graphon_filter}
\begin{split}
(T_\bbH X)(v) &= \sum_{i \in \mbZ\setminus \{0\}} \sum_{k=0}^{K-1} h_k \lambda_i^k \varphi_i(v) \int_0^1 \varphi_i(u)X(u)du\\
&= \sum_{i \in \mbZ\setminus \{0\}} h(\lambda_i) \varphi_i(v) \int_0^1 \varphi_i(u)X(u)du \text{.}
\end{split}
\end{align}
Observe that $T_\bbH$ has spectral representation $h(\lambda) = \sum_{k=0}^{K-1} h_k \lambda^k$, which only depends on the graphon eigenvalues and the coefficients $h_k$.

\section{Graphon neural networks} \label{sec:wnns}



Leveraging the definition of the convolution operation for graphon signals, we can analogously define graphon neural networks as layered architectures consisting of banks of graphon convolutional filters and nonlinear activation functions. Denoting the nonlinear activation function $\sigma$, the $\ell$th layer of a graphon neural network can be written as \cite{ruiz2020wnn}
\begin{equation}
X^f_{\ell} = \sigma\left(\sum_{g=1}^{F_{\ell-1}} \bbh_{\ell}^{fg} *_\bbW X^g_{\ell-1} \right)
\end{equation}
where $1 \leq f \leq F_{\ell}$ and $F_{\ell}$ is the number of features at the output of layer $\ell$. For a WNN with $L$ layers, the WNN output is given by $Y^f = X_L^f$. The input features at the first layer, $X_0^g$, are the input data $X^g$ for $1 \leq g \leq F_0$.
Like the GNN, this WNN can be written more succinctly as the map $Y = \bbPhi(\ccalH; \bbW; X)$, where $\ccalH = \{\bbh_\ell^{fg}\}_{\ell,f,g}$. 

In the map $\bbPhi(\ccalH; \bbW; X)$, $\ccalH$ is independent of the graphon $\bbW$. More importantly, in the GNN $\bbPhi(\ccalH; \bbS_n; \bbx_n)$ and in the WNN $\bbPhi(\ccalH; \bbW; X)$ the sets $\ccalH$ can be the same. This allows building GNNs from WNNs by instantiating graphs $\bbG_n$ and signals $\bbx_n$ from graphons $\bbW$ and signals $X$. Similarly, we can define WNNs induced by GNNs. The WNN induced by a GNN $\bbPhi(\ccalH; \bbS_n; \bbx_n)$ is given by $\bbPhi(\ccalH; \bbW_{n}; X_n)$, where $\bbW_n$ is the {graphon induced by} $\bbG_n$ \eqref{eqn:graphon_ind} and $X_n$ is the {graphon signal induced by} $\bbx_n$ \eqref{eqn:graphon_signal_ind}.
The definition of WNNs induced by GNNs is useful because it allows comparing GNNs with WNNs. This will be leveraged to break the stability analysis of GNNs down into simpler steps, the first of which is the stability analysis of WNNs. 

\subsection{Stability of WNNs}

Let $Y=\bbPhi(\ccalH;\bbW;X)$ be a WNN on the graphon $\bbW$ and suppose that $\bbW$ is perturbed by a symmetric kernel $\bbA$ to yield the graphon $\bbW' = \bbW + \bbA$ and the WNN $Y'=\bbPhi(\ccalH;\bbW';X)$. The stability of this WNN can be quantified by bounding $\|Y-Y'\|$. We start by introducing the constants $n_c^{(p)}$ and $\delta_c^{(pq)}$.

\begin{definition} \label{def:constants}
Let $\bbW^{(p)}$ and $\bbW^{(q)}$ be two graphons with labels $(p)$ and $(q)$ assigned arbitrarily. Let $c \in [0,1]$. We define $n^{(p)}_c$ and $\delta^{(pq)}_c$ as
\begin{align}
\begin{split}
n^{(p)}_c &= |\ccalC^{(p)}| \quad \text{with}\quad \ccalC^{(p)} = \{i\ |\ |\lambda^{(p)}_{i}| \geq c \}\\
\delta^{(pq)}_c &= \min_{i \in \ccalC^{(q)}}\{|\lambda^{(p)}_{i}-{\lambda}^{(q)}_{i+\mbox{\scriptsize sgn}(i)}|, |{\lambda}^{(q)}_{i}-\lambda^{(p)}_{i+\mbox{\scriptsize sgn}(i)}|, \\
&\quad \quad \quad \quad \quad \quad \quad \quad \quad  |\lambda^{(p)}_{1}-{\lambda}^{(q)}_{-1}|, |{\lambda}^{(q)}_{1}-\lambda^{(p)}_{-1}|\}
\end{split}
\end{align}
where $\lambda_i^{(p)}$ and $\lambda_i^{(q)}$ are the eigenvalues of $\bbW^{(p)}$ and $\bbW^{(q)}$ respectively.
\end{definition}
For a graphon labeled $\bbW^{(p)}$, $n_c^{(p)}$ corresponds to the number of eigenvalues with indices in the set $\ccalC^{(p)}$, i.e., to the number of eigenvalues satisfying $\lambda_i^{(p)}\geq c$. For any two graphons $\bbW^{(p)}$ and $\bbW^{(q)}$, $\delta_c^{(pq)}$ corresponds to the minimum eigenvalue difference (or eigengap) between eigenvalues $\lambda_i^{(q)}$ with $i \in \ccalC^{(q)}$ and $\lambda_j^{(p)}$ with $j$ consecutive to $i$.

The following assumptions are needed to prove stability of WNNs in Theorem \ref{thm:wnn_stab}, whose proof we defer to the appendices \cite{ruiz2021extended}.


\begin{assumption} \label{as:filter_lipschitz}
The convolutional filters $h(\lambda)$ are $A_2$-Lipschitz and non-amplifying, i.e., $|h(\lambda)-h(\mu)| \leq A_2 |\lambda-\mu|$ and $|h(\lambda)| < 1$.
\end{assumption}

\begin{assumption} \label{as:nonlinearity_lipschitz}
The activation functions are normalized Lipschitz, i.e., $|\sigma(x_2)-\sigma(x_1)|\leq |x_2-x_1|$, and $\sigma(0)=0$.
\end{assumption}

Note that AS\ref{as:filter_lipschitz} can be satisfied by adding penalties during training, and AS\ref{as:nonlinearity_lipschitz} is satisfied for most common activation functions.


\begin{theorem}[WNN stability] \label{thm:wnn_stab}
Let $\bbW: [0,1]^2 \to [0,1]$ be a graphon and let $\bbW': [0,1]^2\to[0,1]$ be a perturbation of this graphon such that $\bbW'-\bbW=\bbA$ where $\bbA$ is symmetric and $\|\bbA\| \leq \varepsilon$. Consider the $L$-layer WNNs given by $Y=\bbPhi(\ccalH;\bbW;X)$ and $Y'=\bbPhi(\ccalH;\bbW';X)$ where $F_0=F_L=1$ and $F_\ell = F$ for $1 \leq \ell \leq L-1$. Let the graphon convolutions \eqref{eqn:lsi-wf} in all layers be such that  $h(\lambda)$ is constant for $|\lambda|<c$. Then, under Assumptions \ref{as:filter_lipschitz}--\ref{as:nonlinearity_lipschitz}, it holds
\begin{equation*}
\|Y' - Y\| \leq LF^{L-1}\left(A_2 + \dfrac{\pi n_c}{\delta_c}\right)\varepsilon \|X\|
\end{equation*}
where we have labeled the graphons $\bbW^{(1)}:=\bbW$ and $\bbW^{(2)}:=\bbW'$ and defined $n_c := n_c^{(2)}$ and $\delta_c := \delta_c^{(12)}$ [cf. Definition \ref{def:constants}]. 
\end{theorem}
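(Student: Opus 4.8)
The plan is to establish the bound in three steps: (i) a stability estimate for a single graphon convolution; (ii) uniform bounds on the norms of the intermediate WNN features; and (iii) a layer-by-layer propagation that assembles (i) and (ii) into the final inequality.

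For step (i), consider two graphon convolutions $T_\bbH$ and $T_{\bbH'}$ built from $\bbW$ and $\bbW'$ with the same coefficients $\bbh$, and bound $\|T_\bbH X - T_{\bbH'} X\|$. Using the spectral form \eqref{eqn:spec-graphon_filter}, and the fact that the nonzero eigenvectors of a self-adjoint Hilbert--Schmidt operator together with its kernel span $L_2([0,1])$, write $T_\bbH X = \sum_{i\in\ccalC}h(\lambda_i)\varphi_i\langle\varphi_i,X\rangle + \beta\,(I-P_\ccalC)X$, where $\ccalC$ is the (finite) set of indices with $|\lambda_i|\ge c$, $P_\ccalC$ is the orthogonal projection onto the corresponding eigenspace, and $\beta := h(0)$ is the constant value of $h$ on $|\lambda|<c$; do the same for $T_{\bbH'}$. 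Subtracting, the low-pass contribution collapses to $\beta\,(P_\ccalC' - P_\ccalC)X$, while on the high-pass indices I would add and subtract $h(\lambda_i')\varphi_i\langle\varphi_i,X\rangle$ to split off an eigenvalue-mismatch term and an eigenvector-mismatch term. The eigenvalue term is controlled by the $A_2$-Lipschitz property of $h$ together with Weyl's inequality $|\lambda_i-\lambda_i'|\le\|T_\bbW-T_{\bbW'}\|\le\varepsilon$, contributing $A_2\varepsilon$; the eigenvector terms (those in the high-pass sum and those hidden in $P_\ccalC' - P_\ccalC$) are controlled by a Davis--Kahan angle bound $\|\varphi_i-\varphi_i'\|\le(\pi/2)(\varepsilon/\delta_c)$ valid for $i\in\ccalC$, together with $|h|<1$; summing over the $n_c$ such eigenvectors gives the $\pi n_c/\delta_c$ term. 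Altogether $\|T_\bbH X - T_{\bbH'}X\|\le\big(A_2+\pi n_c/\delta_c\big)\varepsilon\|X\|$.

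For step (ii), the non-amplifying assumption gives $\|T_\bbH\|\le 1$, and since $\sigma$ is normalized Lipschitz with $\sigma(0)=0$ one layer obeys $\|X_\ell^f\|\le\sum_g\|\bbh_\ell^{fg}*_{\bbW}X_{\ell-1}^g\|\le\sum_g\|X_{\ell-1}^g\|$; iterating from $F_0=1$ yields $\|X_\ell^f\|\le F^{\ell-1}\|X\|$, and likewise for $\|X_\ell^{\prime f}\|$. For step (iii), fix $\ell$, apply the Lipschitz property of $\sigma$, then the triangle inequality after inserting the mixed term $\bbh_\ell^{fg}*_{\bbW'}X_{\ell-1}^g$:
\begin{align*}
\|X_\ell^f - X_\ell^{\prime f}\| &\le \sum_g \big\|\bbh_\ell^{fg}*_{\bbW}X_{\ell-1}^g - \bbh_\ell^{fg}*_{\bbW'}X_{\ell-1}^g\big\| \\
&\quad {}+ \sum_g \big\|\bbh_\ell^{fg}*_{\bbW'}(X_{\ell-1}^g - X_{\ell-1}^{\prime g})\big\| .
\end{align*}
The first sum is bounded using steps (i) and (ii) by $F\big(A_2+\pi n_c/\delta_c\big)\varepsilon\,F^{\ell-2}\|X\|$, and the second by $\sum_g\|X_{\ell-1}^g - X_{\ell-1}^{\prime g}\|$ using $\|T_{\bbH'}\|\le 1$. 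This is a linear recursion in the per-layer error; unrolling it over $L$ layers (with $F_0=F_L=1$) produces $L$ terms each at most $F^{L-1}\big(A_2+\pi n_c/\delta_c\big)\varepsilon\|X\|$, so $\|Y-Y'\|=\|X_L^1 - X_L^{\prime 1}\|$ satisfies the claimed bound.

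I expect the crux to be step (i), and within it the low-pass band: since $T_\bbW$ is Hilbert--Schmidt its eigenvalues accumulate at $0$ with eigengaps shrinking to $0$, so Davis--Kahan applied to every eigenvector would diverge. Requiring $h$ to be constant on $|\lambda|<c$ is precisely what converts the infinite low-pass tail into a single projection difference $P_\ccalC'-P_\ccalC$ sensitive only to the $n_c$ well-separated eigenvectors above the threshold. Making this rigorous requires identifying the correct eigengap $\delta_c$ from Definition~\ref{def:constants}, ensuring that no eigenvalue crosses the level $\pm c$ under the perturbation (so that the high-pass index sets $\ccalC^{(1)}$ and $\ccalC^{(2)}$ can be matched term by term), and carefully handling the interlacing used in the definition of $\delta_c$. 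The remaining steps are routine manipulations with triangle inequalities and the Lipschitz constants.
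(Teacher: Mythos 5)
Your proposal is correct and follows essentially the same route as the paper: a single-filter stability bound combining a Weyl-type eigenvalue inequality (yielding the $A_2\varepsilon$ term) with a Davis--Kahan bound on eigenfunction differences (yielding the $\pi n_c/\delta_c$ term), followed by the standard layer-by-layer recursion that uses the normalized-Lipschitz nonlinearity and the non-amplifying filters to propagate the per-filter error into the factor $LF^{L-1}$. Your projection-difference treatment of the constant low band, $\beta(P_{\ccalC'}-P_\ccalC)$, is just a cleaner rephrasing of the paper's decomposition of the filter into a constant filter (which cancels exactly) plus a negative-gain low-pass part, and you rightly flag the index-matching issue around the threshold $c$ in Definition~\ref{def:constants} that the paper's own proof glosses over.
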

WNNs are thus Lipschitz stable to absolute graphon perturbations with a stability constant that depends on the hyperparameters of the WNN and on the passing band of the graphon convolutions. The dependence on the architecture is exerted by the number of layers $L$, the number of features $F$, and the convolutional filters $h(\lambda)$ through the Lipschitz constants $A_2$, $n_c$ and $\delta_c$. The constant $A_2$ measures the variability of the filter. The values of $n_c$ and $\delta_c$ depend on the spectra of both $\bbW$ and $\bbW'$, but are primarily related to the length of the filter passing band $1-c$. Since graphon eigenvalues accumulate around zero for $|i|\to\infty$, small values of $c$ translate into large $n_c$, i.e., large number of eigenvalues within the passing band. They also produce small $\delta_c$, i.e., small minimum eigengap between these eigenvalues. Hence, the larger the passing band, the worse  WNN stability, pointing to a trade-off between stability and discriminability in WNNs.


\section{Graph Neural Network Stability}
\label{sec:stability}


Theorem \ref{thm:wnn_stab} can be combined with the GNN-WNN approximation theorem from \cite[Theorem 1]{ruiz2020wnn} to prove stability of GNNs supported on deterministic graphs instantiated from $\bbW$ and $\bbW'$. This theorem bounds the error incurred when using a GNN $\bbPhi(\ccalH; \bar{\bbS}_n; \bbx_n)$ to approximate the WNN $\bbPhi(\ccalH; \bbW; X)$, where $\bar{\bbS}_n$ and $\bbx_n$ are obtained from $\bbW$ and $X$ as in \eqref{eqn:deterministic_graph} and \eqref{eqn:sample_graph_signal}. Stability of GNNs $\bbPhi(\ccalH; \bar{\bbS}_n; \bbx_n)$ to graphon perturbations is stated and proved in Theorem \ref{thm:det_graph_stab}, under the same assumptions of Theorem \ref{thm:wnn_stab} and Assumptions \ref{as:graphon_lipschitz} and \ref{as:perturb_lipschitz} below.

\begin{assumption} \label{as:graphon_lipschitz}
The graphon $\bbW$ is $A_1$-Lipschitz, i.e., $|\bbW(x_2,y_2)-\bbW(x_1,y_1)|\leq A_1(|x_2-x_1|+|y_2-y_1|)$.
\end{assumption}

\begin{assumption} \label{as:perturb_lipschitz}
The perturbation $\bbA=\bbW'-\bbW$ is $A_3$-Lipschitz.
\end{assumption}


\begin{theorem} \label{thm:det_graph_stab}
Let $\bar{\bbG}_n$ and $\bar{\bbG}_n'$ be $n$-node \textit{deterministic graphs} \eqref{eqn:sample_graph_signal} obtained from the graphons $\bbW$ and $\bbW'=\bbW+\bbA$, where $\bbA$ is symmetric and $\|\bbA\| \leq \varepsilon$. Let $\bar{\bbS}_n, \bar{\bbS}_n' \in \reals^{n\times n}$ denote their GSOs. Consider the $L$-layer GNNs given by $\bar{\bby}_n=\bbPhi(\ccalH;\bar{\bbS}_n;\bbx_n)$ and $\bar{\bby}_n'=\bbPhi(\ccalH;\bar{\bbS}_n';\bbx_n)$ where $F_0=F_L=1$ and $F_\ell = F$ for $1 \leq \ell \leq L-1$. Let the graph convolutions \eqref{eqn:graph_convolution} in all layers be such that $h(\lambda/n)$ is constant for $|\lambda/n|<c$. Then, under Assumptions \ref{as:filter_lipschitz}--\ref{as:perturb_lipschitz}, it holds
\begin{equation*}
\|\bar{\bby}_n'-\bar{\bby}_n\| \leq LF^{L-1}\left(A_2 + \frac{\pi \hat{n}_c}{\hat{\delta}_c}\right) \left( \varepsilon +\dfrac{B}{\sqrt{n}}\right)\|\bbx_n\| 
\end{equation*}
where $B=\sqrt{A_1}+\sqrt{A_1+A_3}$ and $\hat{n}_c$ and $\hat{\delta}_c$ are defined as $\hat{n}_c := \max_{p\in \{2,3,4\}}{n_c^{(p)}}$ and $\hat{\delta}_c := \min\{\delta_c^{(12)}, \delta_c^{(13)},\delta_c^{(24)}\}$ for $\bbW^{(1)}:=\bbW$, $\bbW^{(2)}:=\bbW'$, $\bbW^{(3)}:=\bar{\bbW}_n$ and $\bbW^{(4)}:=\bar{\bbW}_n'$ [cf. Definition \ref{def:constants}].
\end{theorem}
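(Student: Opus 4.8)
The plan is to route the comparison between the two GNNs through a chain of WNNs and then combine Theorem~\ref{thm:wnn_stab} with the GNN--WNN approximation result \cite[Theorem~1]{ruiz2020wnn}. First I would move from the graph domain to the graphon domain. By the definition of the WNN induced by a GNN, $\bar{\bby}_n=\bbPhi(\ccalH;\bar{\bbS}_n;\bbx_n)$ induces exactly $Y_n:=\bbPhi(\ccalH;\bar{\bbW}_n;X_n)$ and $\bar{\bby}_n'$ induces $Y_n':=\bbPhi(\ccalH;\bar{\bbW}_n';X_n)$, where $\bar{\bbW}_n,\bar{\bbW}_n'$ are the graphons induced by $\bar{\bbG}_n,\bar{\bbG}_n'$ \eqref{eqn:graphon_ind}, $X_n$ is the graphon signal induced by $\bbx_n$ \eqref{eqn:graphon_signal_ind}, and the band condition ``$h(\lambda/n)$ constant for $|\lambda/n|<c$'' is exactly ``$h$ constant on $(-c,c)$'' once the $1/n$ operator rescaling relating $\bar{\bbS}_n$ to $T_{\bar{\bbW}_n}$ is absorbed into evaluating $h$ at $\lambda/n$ --- the same band condition both invoked theorems need. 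Passing to induced step functions scales the $L_2$ geometry by $1/\sqrt{n}$, so $\|\bar{\bby}_n'-\bar{\bby}_n\|=\sqrt{n}\,\|Y_n'-Y_n\|$ and $\|\bbx_n\|=\sqrt{n}\,\|X_n\|$; it therefore suffices to bound $\|Y_n'-Y_n\|$ against $\|X_n\|$.

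Next, I would introduce the two ``exact'' WNNs $Y:=\bbPhi(\ccalH;\bbW;X_n)$ and $Y':=\bbPhi(\ccalH;\bbW';X_n)$ and split
\[
\|Y_n'-Y_n\|\;\le\;\|Y_n-Y\|\;+\;\|Y-Y'\|\;+\;\|Y'-Y_n'\|.
\]
The middle term is bounded directly by Theorem~\ref{thm:wnn_stab} with $\bbW^{(1)}:=\bbW$, $\bbW^{(2)}:=\bbW'$ and $\|\bbW'-\bbW\|=\|\bbA\|\le\varepsilon$, giving $\|Y-Y'\|\le LF^{L-1}\big(A_2+\pi n_c^{(2)}/\delta_c^{(12)}\big)\,\varepsilon\,\|X_n\|$. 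The two outer terms are GNN--WNN approximation errors. For $\|Y_n-Y\|$: $Y_n$ is precisely the WNN induced by the GNN obtained from $\bbW$ and the signal $X_n$ (sampling $X_n$ on the regular grid returns $\bbx_n$ exactly, so no signal-approximation term appears), and $\bbW$ is $A_1$-Lipschitz by AS\ref{as:graphon_lipschitz}; hence \cite[Theorem~1]{ruiz2020wnn} gives $\|Y_n-Y\|\le LF^{L-1}\big(A_2+\pi n_c^{(3)}/\delta_c^{(13)}\big)\frac{\sqrt{A_1}}{\sqrt{n}}\|X_n\|$ with $\bbW^{(3)}:=\bar{\bbW}_n$. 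For $\|Y'-Y_n'\|$: since $\bbW'=\bbW+\bbA$ is $(A_1+A_3)$-Lipschitz by AS\ref{as:graphon_lipschitz}--\ref{as:perturb_lipschitz}, the same theorem applied to $\bbW'$ gives $\|Y'-Y_n'\|\le LF^{L-1}\big(A_2+\pi n_c^{(4)}/\delta_c^{(24)}\big)\frac{\sqrt{A_1+A_3}}{\sqrt{n}}\|X_n\|$ with $\bbW^{(4)}:=\bar{\bbW}_n'$.

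To finish, I would bound each $n_c^{(p)}$ ($p\in\{2,3,4\}$) above by $\hat{n}_c$ and each of $\delta_c^{(12)},\delta_c^{(13)},\delta_c^{(24)}$ below by $\hat{\delta}_c$, factor out the common $LF^{L-1}\big(A_2+\pi\hat{n}_c/\hat{\delta}_c\big)$, and add the three scalar coefficients to obtain $\varepsilon+(\sqrt{A_1}+\sqrt{A_1+A_3})/\sqrt{n}=\varepsilon+B/\sqrt{n}$; multiplying the resulting bound on $\|Y_n'-Y_n\|$ through by $\sqrt{n}$ turns the left side into $\|\bar{\bby}_n'-\bar{\bby}_n\|$ and the $\sqrt{n}\,\|X_n\|$ on the right into $\|\bbx_n\|$, which is the claimed inequality. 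The work here is essentially bookkeeping rather than a new idea: the one thing to check carefully is that the induced-WNN identification holds \emph{with equality} --- the pointwise nonlinearity commutes with the induction map, and the graph convolution on $\bar{\bbS}_n$ coincides with the graphon convolution on $\bar{\bbW}_n$ after the $\lambda\mapsto\lambda/n$ rescaling --- and that one tracks which of the four graphon labels enters each of the three bounds, since that is exactly what pins down the $\max$ and $\min$ in the definitions of $\hat{n}_c$ and $\hat{\delta}_c$. The only mild subtlety is confirming that \cite[Theorem~1]{ruiz2020wnn} applies with the step-function signal $X_n$, which it does because its band/Lipschitz hypotheses concern only the graphon and the signal error vanishes by exact grid recovery.
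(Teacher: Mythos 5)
Your proposal is correct and follows essentially the same route as the paper: pass to the induced graphon signals via the $\sqrt{n}$ norm identities, apply the triangle inequality with the exact WNNs $Y$ and $Y'$ as intermediate terms, bound the middle term by Theorem~\ref{thm:wnn_stab} and the two outer terms by \cite[Theorem~1]{ruiz2020wnn}, and then take the maximum of the $n_c^{(p)}$ and the minimum of the $\delta_c^{(pq)}$. The paper's own proof is just a terser version of this same argument; your extra care about the induced-WNN identification and the exact grid recovery of $\bbx_n$ is sound bookkeeping that the paper leaves implicit.
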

\begin{proof}
Leveraging the identities $\|\bar{\bby}_n'-\bar{\bby}_n\|_2 = \sqrt{n}\|\bar{Y}_n'-\bar{Y}_n\|_{L^2([0,1])}$ and $\|\bbx_n\|_2 = \sqrt{n}\|X_n\|_{L^2([0,1])}$, we reason in terms of the induced graphon signals $\bar{Y}_n'$ and $\bar{Y}_n$ \eqref{eqn:graphon_signal_ind} to bound $\|\bar{Y}_n'-\bar{Y}_n\|_{L^2([0,1])}$ as
\begin{align}
\begin{split}
\|\bar{Y}_n'-\bar{Y}_n\| &= \|\bar{Y}_n'-Y'+Y'-Y+Y-\bar{Y}_n\| \\
&\leq \|\bar{Y}_n'-Y'\|+\|Y'-Y\|+\|Y-\bar{Y}_n\|
\end{split}
\end{align}
using the triangle inequality.
The first and third terms on the RHS of this expression are bounded by \cite[Theorem 1]{ruiz2020wnn}. The second term is bounded by Theorem \ref{thm:wnn_stab}. Taking the maximum among the $n_c^{(p)}$ and the minimum among the $\delta_c^{(pq)}$ completes the proof.
\end{proof}

Theorem \ref{thm:det_graph_stab} can be further extended to GNNs $\bbPhi(\ccalH; \bbS_n; \bbx_n)$ and $\bbPhi(\ccalH; \bbS_n'; \bbx_n)$ defined on the stochastic graphs $\bbG_n$ and $\bbG_n'$ \eqref{eqn:stochastic_graph}.


\begin{assumption} \label{as:graphon_degree}
For a fixed value of $\xi \in (0,1)$, $n$ is such that
\begin{align}
n-\dfrac{\log{{2n}/{\xi}}}{d_\bbW} > 2\dfrac{A_1}{d_\bbW} \quad \text{and} \quad 
n-\dfrac{\log{{2n}/{\xi}}}{d_{\bbW'}} > 2\dfrac{A_1+A_3}{d_{\bbW'}}
\end{align}
where $d_\bbW, d_{\bbW'}$ denote the maximum degree of the graphons $\bbW, \bbW'$, i.e., $d_\bbW = \max_x \int_0^1 \bbW(x,y) dy$.
\end{assumption}


\begin{figure}[t]
\begin{minipage}[b]{1.0\linewidth}
  \centering
  \centerline{\includegraphics[height=0.2\textheight]{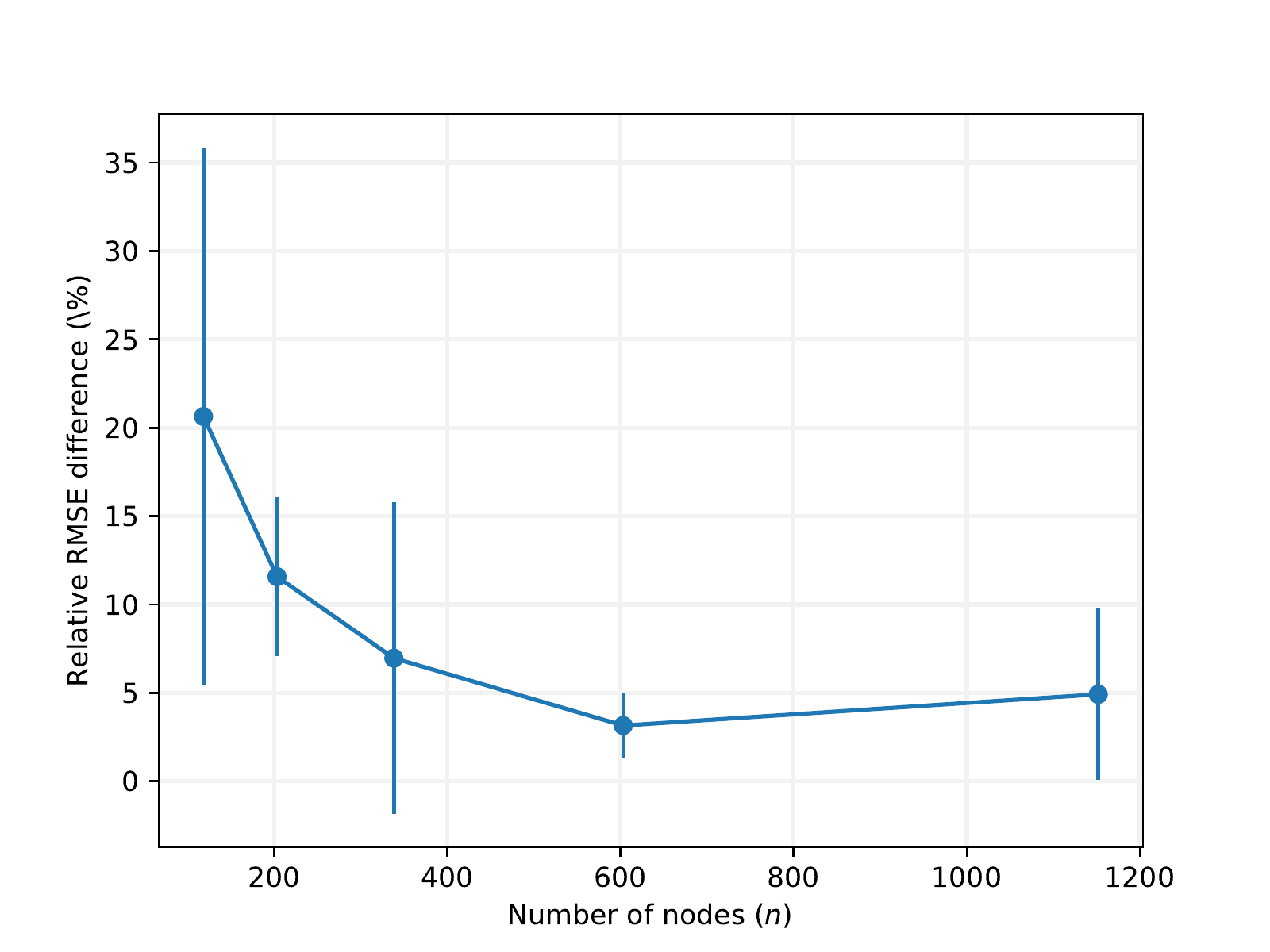}}
\end{minipage}
\caption{Relative RMSE difference on the test set between graphs $\bbG_n$ and $\bbG_n'$ for the movie ``Contact''.}
\label{fig:res}
\end{figure}


\begin{theorem} \label{thm:stab_stochastic}
Let $\bbG_n$ and $\bbG_n'$ be $n$-node \textit{stochastic graphs} \eqref{eqn:stochastic_graph} obtained from the graphons $\bbW$ and $\bbW'=\bbW+\bbA$, where $\bbA$ is symmetric and $\|\bbA\| \leq \varepsilon$. Let $\bbS_n, \bbS_n' \in \reals^{n\times n}$ denote their GSOs. Consider the $L$-layer GNNs given by $\bby_n=\bbPhi(\ccalH;\bbS_n;\bbx_n)$ and $\bby_n'=\bbPhi(\ccalH;\bbS_n';\bbx_n)$ where $F_0=F_L=1$ and $F_\ell = F$ for $1 \leq \ell \leq L-1$. Let the graph convolutions \eqref{eqn:graph_convolution} in all layers be such that $h(\lambda/n)$ is constant for $|\lambda/n|<c$. Then, under Assumptions \ref{as:filter_lipschitz}--\ref{as:graphon_degree}, it holds with probability at least $1-\xi$
\begin{align*}
\begin{split}
\|\bby_n'-\bby_n\| \leq LF^{L-1}\left(A_2 + \frac{\pi \check{n}_c}{\check{\delta}_c}\right) \left( \varepsilon +\dfrac{B+4\sqrt{\log{2n/\xi}}}{\sqrt{n}}\right)\|\bbx_n\| 
\end{split}
\end{align*}
where $B=\sqrt{A_1}+\sqrt{A_1+A_3}$ and $\check{n}_c$ and $\check{\delta}_c$ are defined as $\check{n}_c := \max\{\max_{p\in \{5,6\}}{n_c^{(p)}},\hat{n}_c\}$ and $\check{\delta}_c := \min\{\delta_c^{(35)},\delta_c^{(46)},\hat{\delta}_c\}$ for $\bbW^{(3)}:=\bar{\bbW}_n$, $\bbW^{(4)}:=\bar{\bbW}_n'$, $\bbW^{(5)}:={\bbW}_n$, $\bbW^{(6)}:={\bbW}_n'$ [cf. Definition \ref{def:constants}] and $\hat{n}_c$, $\hat{\delta}_c$ as in Theorem \ref{thm:det_graph_stab}.
\end{theorem}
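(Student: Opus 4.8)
The plan is to follow the telescoping strategy of Theorem~\ref{thm:det_graph_stab}, but to lengthen the chain of comparisons so that it passes from each stochastic graph down to the corresponding deterministic graph. First I would reduce everything to graphon signals: since the graphon signal induced by a GNN output coincides with the induced WNN applied to the induced input signal, $\bby_n$ and $\bby_n'$ induce the graphon signals $Y_n=\bbPhi(\ccalH;\bbW_n;X_n)$ and $Y_n'=\bbPhi(\ccalH;\bbW_n';X_n)$, where $\bbW_n,\bbW_n'$ are the graphons induced by $\bbG_n,\bbG_n'$ \eqref{eqn:graphon_ind} and $X_n$ is the graphon signal induced by $\bbx_n$ \eqref{eqn:graphon_signal_ind}. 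Using $\|\bby_n'-\bby_n\|_2=\sqrt n\,\|Y_n'-Y_n\|$ and $\|\bbx_n\|_2=\sqrt n\,\|X_n\|$ as in Theorem~\ref{thm:det_graph_stab}, it suffices to bound $\|Y_n'-Y_n\|$. Introducing the WNNs $\bar{Y}_n=\bbPhi(\ccalH;\bar{\bbW}_n;X_n)$, $\bar{Y}_n'=\bbPhi(\ccalH;\bar{\bbW}_n';X_n)$ on the induced graphons of the deterministic graphs, and $Y=\bbPhi(\ccalH;\bbW;X)$, $Y'=\bbPhi(\ccalH;\bbW';X)$, the triangle inequality gives
\begin{align*}
\|Y_n'-Y_n\|\le{}&\|Y_n-\bar{Y}_n\|+\|\bar{Y}_n-Y\|+\|Y-Y'\|\\
&{}+\|Y'-\bar{Y}_n'\|+\|\bar{Y}_n'-Y_n'\|.
\end{align*}

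The three middle terms are handled exactly as in Theorem~\ref{thm:det_graph_stab}: $\|\bar{Y}_n-Y\|$ and $\|Y'-\bar{Y}_n'\|$ are bounded by the GNN--WNN approximation result \cite[Theorem~1]{ruiz2020wnn} applied to the $A_1$-Lipschitz graphon $\bbW$ and to the $(A_1+A_3)$-Lipschitz graphon $\bbW'$ (Assumptions~\ref{as:graphon_lipschitz},~\ref{as:perturb_lipschitz}), producing the factors $\sqrt{A_1}/\sqrt n$ and $\sqrt{A_1+A_3}/\sqrt n$ together with the spectral constants $(n_c^{(3)},\delta_c^{(13)})$ and $(n_c^{(4)},\delta_c^{(24)})$; and $\|Y-Y'\|$ is bounded by Theorem~\ref{thm:wnn_stab}, giving the factor $\varepsilon$ and the constants $(n_c^{(2)},\delta_c^{(12)})$. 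For the two new terms I would reapply Theorem~\ref{thm:wnn_stab}, now regarding the random symmetric kernel $\bbW_n-\bar{\bbW}_n$ as an absolute perturbation of $\bar{\bbW}_n$ (and $\bbW_n'-\bar{\bbW}_n'$ of $\bar{\bbW}_n'$); this yields $\|Y_n-\bar{Y}_n\|\le LF^{L-1}\big(A_2+\pi n_c^{(5)}/\delta_c^{(35)}\big)\|\bbW_n-\bar{\bbW}_n\|\,\|X\|$ and analogously for the primed pair with constants $(n_c^{(6)},\delta_c^{(46)})$. At every link the filter-bandwidth hypothesis holds because $h(\lambda/n)$ is constant for $|\lambda/n|<c$, which is precisely the ``$h$ constant for $|\lambda|<c$'' requirement relative to the spectrum of each of $\bbW,\bbW',\bar{\bbW}_n,\bar{\bbW}_n',\bbW_n,\bbW_n'$.

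The heart of the argument is a high-probability bound on $\|\bbW_n-\bar{\bbW}_n\|$ and $\|\bbW_n'-\bar{\bbW}_n'\|$. Since $\bbW_n-\bar{\bbW}_n$ is the step graphon induced by $\bbS_n-\bar{\bbS}_n$ on the uniform partition, its operator norm equals $\tfrac1n\|\bbS_n-\bar{\bbS}_n\|$, and $\bbS_n-\bar{\bbS}_n$ is a symmetric matrix whose on-and-above-diagonal entries are independent, zero mean, and bounded in $[-1,1]$ (entry $(i,j)$ is $\text{Bernoulli}(\bar{s}_{ij})-\bar{s}_{ij}$). A matrix concentration inequality for such edge-independent random matrices then gives $\|\bbS_n-\bar{\bbS}_n\|=O\big(\sqrt{n\log(2n/\xi)}\big)$ with probability at least $1-\xi/2$, i.e.\ $\|\bbW_n-\bar{\bbW}_n\|=O\big(\sqrt{\log(2n/\xi)/n}\big)$, and Assumption~\ref{as:graphon_degree} is exactly what makes this bound informative, since it forces the expected degrees $n d_\bbW$ and $n d_{\bbW'}$ to dominate $\log(2n/\xi)$ (the classical ``$np\gtrsim\log n$'' regime in which the adjacency matrix concentrates around its mean). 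Choosing the constant so that the two such contributions jointly amount to $4\sqrt{\log(2n/\xi)}/\sqrt n$, and intersecting the two events (one per graph) by a union bound, places us on an event of probability at least $1-\xi$. This concentration step---obtaining the operator-norm deviation at the $\sqrt{\log(2n/\xi)/n}$ scale with the stated constant---is the part I expect to require the most care; everything else is the bookkeeping already carried out for Theorem~\ref{thm:det_graph_stab}.

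To conclude, on this event I would upper bound every spectral ratio $n_c^{(p)}/\delta_c^{(pq)}$ appearing across the five links by $\check n_c/\check\delta_c$---legitimate precisely because $\check n_c$ is the maximum of $n_c^{(2)},\dots,n_c^{(6)}$ and $\check\delta_c$ the minimum of $\delta_c^{(12)},\delta_c^{(13)},\delta_c^{(24)},\delta_c^{(35)},\delta_c^{(46)}$---factor out $LF^{L-1}\big(A_2+\pi\check n_c/\check\delta_c\big)$, and add the five error magnitudes $\varepsilon$, $\sqrt{A_1}/\sqrt n$, $\sqrt{A_1+A_3}/\sqrt n$, and two copies of the concentration bound, obtaining
\begin{equation*}
\varepsilon+\frac{B+4\sqrt{\log(2n/\xi)}}{\sqrt n},\qquad B:=\sqrt{A_1}+\sqrt{A_1+A_3}.
\end{equation*}
Multiplying by $\sqrt n$ and returning to the graph-signal norms via $\|\bby_n'-\bby_n\|_2=\sqrt n\,\|Y_n'-Y_n\|$ and $\|\bbx_n\|_2=\sqrt n\,\|X_n\|$ (identifying $\|X\|$ with $\|X_n\|$ as in Theorem~\ref{thm:det_graph_stab}) yields the claimed inequality.
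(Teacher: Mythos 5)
Your proposal is correct and follows essentially the same route as the paper: the same telescoping chain through the deterministic graphs and the limiting WNNs, the same operator-norm concentration of $\bbS_n-\bar{\bbS}_n$ at scale $2\sqrt{n\log(2n/\xi)}$ enabled by AS5, and the same max/min bookkeeping of the spectral constants. The only (cosmetic) difference is that where the paper proves a standalone Lemma~\ref{lemma_random_graph} bounding the stochastic-versus-deterministic GNN outputs at the graph level, you obtain that link by reapplying Theorem~\ref{thm:wnn_stab} to the induced graphons with $\bbW_n-\bar{\bbW}_n$ as the absolute perturbation---an equivalent packaging---and your explicit $\xi/2$ union bound over the two graphs is, if anything, slightly more careful than the paper's.
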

\begin{proof}[Proof of Theorem \ref{thm:stab_stochastic}]
The proof follows trivially from Theorem \ref{thm:det_graph_stab} and the following lemma, whose proof we defer to the appendices \cite{ruiz2021extended}.
\begin{lemma} \label{lemma_random_graph}
Let $\bar{\bbG}_n$ be an undirected graph with $\bar{\bbS}_n \in [0,1]^{n \times n}$. Let $\bbG_n$ be a stochastic graph obtained from $\bar{\bbG}_n$ as in \eqref{eqn:stochastic_graph}. Consider the $L$-layer GNNs given by $\bar{\bby}_n=\bbPhi(\ccalH;\bar{\bbS}_n;\bbx_n)$ and ${\bby}_n=\bbPhi(\ccalH;{\bbS}_n;\bbx_n)$ where $F_0=F_L=1$ and $F_\ell = F$ for $1 \leq \ell \leq L-1$. Let the graph convolutions \eqref{eqn:graph_convolution} in all layers be such that $h(\lambda/n)$ is constant for $|\lambda/n|<c$. Then, under Assumptions \ref{as:filter_lipschitz},\ref{as:nonlinearity_lipschitz} and \ref{as:graphon_degree}, it holds with probability at least $1-\xi$
\begin{align*}
\begin{split}
\|\bby_n-\bar{\bby}_n\| \leq LF^{L-1}\left(A_2 + \frac{\pi {n}^{(q)}_c}{{\delta}^{(pq)}_c}\right) \left({2\sqrt{\log{2n/\xi}}}/{\sqrt{n}}\right)\|\bbx_n\| 
\end{split}
\end{align*}
where ${n}^{(q)}_c$ and ${\delta}^{(pq)}_c$ are as in Definition \ref{def:constants} for $\bbW^{(p)}=\bar{\bbW}_n$, the graphon induced by $\bar{\bbG}_n$, and $\bbW^{(q)}={\bbW}_n$, the graphon induced by ${\bbG}_n$ \eqref{eqn:graphon_ind}. 
\end{lemma}
Use the triangle inequality to write
\begin{align}
\begin{split}
\|\bby_n-\bby_n'\| &= \|\bby_n-\bar{\bby}_n+\bar{\bby}_n-\bar{\bby}_n'+\bar{\bby}_n'-\bby_n'\| \\
&\leq \|\bby_n-\bar{\bby}_n\|+\|\bar{\bby}_n-\bar{\bby}_n'\|+\|\bar{\bby}_n'-\bby_n'\|\text{.}
\end{split}
\end{align}
The first and third terms on the RHS of this expression are bounded by Lemma \ref{lemma_random_graph}. The second term is bounded by Theorem \ref{thm:det_graph_stab}. Taking the maximum of the $n_c^{(p)}$ and the minimum of the $\delta_c^{(pq)}$ completes the proof.

\end{proof}

From Theorems \ref{thm:det_graph_stab} and \ref{thm:stab_stochastic}, we conclude that GNNs are asymptotically stable to absolute graphon perturbations on both deterministic and stochastic graphs associated with a graphon. This is an important result, as absolute graphon perturbations can be used to model a variety of graph perturbations. 
We further observe that the stability bound depends on the graphon variability $A_1$ and on the perturbation variability $A_3$. It is also inversely proportional to the size of the graph. In particular, as $n\to\infty$ the GNN stability bound converges to the stability bound of the WNN [cf. Theorem \ref{thm:wnn_stab}]. {We demonstrate this asymptotic behavior in the numerical experiment of Section \ref{sec:sims}}.

Like in WNNs, in GNNs stability decreases with the architecture depth $L$ and width $F$. It also depends on the variability $A_2$ and on the passing band $1-c$ of the convolutional filters $h(\lambda)$. The wider the passing band, the greater the number of eigenvalues $\hat{n}_c$, $\check{n}_c$ larger than $c$, and the smaller the minimum eigengap $\hat{\delta}_c$, $\check{\delta}_c$. Therefore, GNNs also exhibit a trade-off between stability and discriminability.
In practice, however, this issue is alleviated by the nonlinearities $\sigma$, which act as rectifiers that scatter spectral components associated with small eigenvalues to upper parts of the spectrum where they can be discriminated.





\section{Numerical Experiments}
\label{sec:sims}

Using the MovieLens 100k dataset \cite{harper16-movielens}, we implement a GNN supported on a movie similarity network to predict the ratings that different users would give to the movie ``Contact'' considering their ratings to other movies in the network and the similarities between these movies. The dataset consists of 100,000 integer ratings between 1 and 5 given by $U=943$ users to $M=1682$ movies. To build the movie similarity graph, we first store the data in a $U \times M$ matrix $\bbR$ such that $[\bbR]_{um}$ is the rating of user $u$ to movie $m$ if it exists, and zero otherwise; then, we compute the edge weights by calculating pairwise correlations between different movies, i.e., betweeen the columns of $\bbR$. Denoting the index of the movie ``Contact'' $m_0$, the input samples $\bbx_u$ are generated by zero-ing out the entry $[\bbR]_{um_0}$ of the user vectors $\bbr_u=[\bbR]_{u:}$. The output samples are vectors $\bby_u$ where $[\bby_u]_{m}=[\bbR]_{um_0}$ if $m=m_0$ and zero otherwise. 

To analyze the stability of GNNs to graphon perturbations in this setting, we consider the full $1682$-movie network to be a SBM graphon $\bbW$, and perturb it to obtain $\bbW'=\bbW+\bbA$ where $\bbA(u,v) = (1 - \exp{(\bbW(u,v)^{-1})})/10$. We then instantiate deterministic graphs $\bar{\bbG}_n$ and $\bar{\bbG}_n'$ with $n=118, 203, 338, 603, 1152$, and use a 90-10 data split to train a GNN with $L=2$, $F=64$ and $K=5$ on the $\bar{\bbG}_n$ by optimizing the MSE loss. The trained GNNs are tested on both the $\bar{\bbG}_n$ and $\bar{\bbG}_n'$; the relative RMSE difference obtained on the original and perturbed graphs is shown in Fig. \ref{fig:res}. We observe the asymptotic behavior predicted by Theorem \ref{thm:det_graph_stab}---the larger the graph, the more stable the GNN.


\section{Conclusions}
\label{sec:conclusions}

In this paper, we have defined graphon neural networks (WNNs) and characterized their stability to perturbations of the graphon. By interpreting WNNs as generating models for GNNs, we have further analyzed stability of GNNs defined on deterministic and stochastic graphs instantiated from the original and perturbed graphons. These are key contributions to the study of the theoretical properties of GNNs as graphon perturbations, being perturbations of graph models, make for a more general type of graph perturbation than matrix perturbations. We conclude that GNNs are stable to graphon perturbations with a stability bound that decreases asymptotically with the size of the graph. This result has been demonstrated numerically in a movie recommendation experiment.
\appendix
\section{Proof of Theorem 1}
We start by proving stability of graphon convolutions to absolute graphon perturbations, which is stated in Theorem \ref{thm:wft_stability}.
\begin{theorem}\label{thm:wft_stability}
Consider the graphon convolution given by $Y=T_\bbH X$ as defined in \eqref{eqn:spec-graphon_filter}, where $h(\lambda)$ is constant for $|\lambda|<c$. Let $Y'=T_{\bbH'} X$ be the perturbed graphon convolution. Under Assumption \ref{as:filter_lipschitz}, it holds that:
\begin{equation}
\label{eqn:thm4}
\|Y-Y'\|\leq \left(A_2 +\frac{\pi n_c}{\delta_c} \right) \epsilon \|X\|,
\end{equation}
where we have labeled the graphons $\bbW^{(1)}:=\bbW$ and $\bbW^{(2)}:=\bbW'$, and defined $n_c := n_c^{(2)}$ and $\delta_c := \delta_c^{(12)}$ [cf. Definition \ref{def:constants}]. 
\end{theorem}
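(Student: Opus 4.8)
The plan is to reduce the claim to an operator-norm estimate on the difference $T_{\bbH}-T_{\bbH'}$ of the two filter operators, and then to split that difference into an eigenvalue-mismatch term and an eigenvector-mismatch term, following the standard template for the stability analysis of shift-invariant filters.

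First I would note that $Y-Y'=(T_{\bbH}-T_{\bbH'})X$, so it suffices to bound $\|T_{\bbH}-T_{\bbH'}\|$ in the $L^2([0,1])$ operator norm and multiply by $\|X\|$. Let $\beta$ be the constant value of $h$ on $\{|\lambda|<c\}$, so that $\beta=h(0)$. Using the spectral form \eqref{eqn:spec-graphon_filter} and the fact that the eigenvalues of a self-adjoint Hilbert--Schmidt operator can accumulate only at $0$, I would write $T_{\bbH}=\beta\,\bbI+\bar{T}_{\bbH}$ with the finite-rank remainder $\bar{T}_{\bbH}=\sum_{|\lambda_i|\ge c}\big(h(\lambda_i)-\beta\big)\varphi_i\varphi_i^{\Hr}$, and likewise $T_{\bbH'}=\beta\,\bbI+\bar{T}_{\bbH'}$, where $\bar{T}_{\bbH'}$ has rank at most $n_c^{(2)}=n_c$. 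The identity terms cancel, so $T_{\bbH}-T_{\bbH'}=\bar{T}_{\bbH}-\bar{T}_{\bbH'}$.

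Next I would add and subtract $\sum(h(\lambda'_i)-\beta)\varphi_i\varphi_i^{\Hr}$ to obtain
\begin{equation*}
\bar{T}_{\bbH}-\bar{T}_{\bbH'}=\underbrace{\sum_i\big(h(\lambda_i)-h(\lambda'_i)\big)\varphi_i\varphi_i^{\Hr}}_{\text{(I)}}+\underbrace{\sum_{i\,:\,|\lambda'_i|\ge c}\big(h(\lambda'_i)-\beta\big)\big(\varphi_i\varphi_i^{\Hr}-\varphi'_i(\varphi'_i)^{\Hr}\big)}_{\text{(II)}},
\end{equation*}
where the sum in (I) is effectively over the finite set $\{i:|\lambda_i|\ge c \text{ or } |\lambda'_i|\ge c\}$ because the summand vanishes elsewhere. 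For (I) I would use that $h$ is $A_2$-Lipschitz (Assumption \ref{as:filter_lipschitz}) together with Weyl's inequality, $|\lambda_i-\lambda'_i|\le\|\bbA\|\le\varepsilon$, so $|h(\lambda_i)-h(\lambda'_i)|\le A_2\varepsilon$; since $\{\varphi_i\}$ is orthonormal this operator is diagonal in that basis and hence $\|(\mathrm{I})\|\le A_2\varepsilon$. For (II) there are exactly $n_c$ nonzero summands, with $|h(\lambda'_i)-\beta|=|h(\lambda'_i)-h(0)|<2$ by the non-amplifying bound; bounding the sum term by term and controlling each projection difference $\|\varphi_i\varphi_i^{\Hr}-\varphi'_i(\varphi'_i)^{\Hr}\|$ by a Davis--Kahan $\sin\theta$ estimate---whose denominator is bounded below precisely by $\delta_c^{(12)}=\delta_c$, which is what Definition \ref{def:constants} is built to supply---gives a bound of the form $\pi n_c\varepsilon/\delta_c$ for $\|(\mathrm{II})\|$. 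Adding the two contributions yields \eqref{eqn:thm4}.

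The step I expect to be the main obstacle is the eigenvector-mismatch term (II), for two reasons. First, the spectrum of $T_{\bbW}$ is two-sided and accumulates at $0$, so isolating a single eigenfunction $\varphi'_i$ through Davis--Kahan requires separating $\lambda'_i$ from \emph{every} other eigenvalue---in particular from its index-neighbors $\lambda_{i\pm1}$ and, at the boundary $i=\pm1$, from the opposite end of the spectrum---and this is exactly why $\delta_c^{(pq)}$ in Definition \ref{def:constants} has its apparently convoluted form; checking that the relevant eigengap is genuinely at least $\delta_c$ for every $i\in\ccalC^{(2)}$ is the crux. Second, an eigenvalue can cross the threshold $|\lambda|=c$ under the perturbation, so the active index sets of $\bar{T}_{\bbH}$ and $\bar{T}_{\bbH'}$ need not coincide; the flatness of $h$ on $\{|\lambda|<c\}$ is what keeps this from contributing extra error terms, and making the add-and-subtract step rigorous in its presence is the delicate bookkeeping I would need to get right.
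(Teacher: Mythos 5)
Your proposal is correct and follows essentially the same route as the paper's proof: the same add-and-subtract decomposition into an eigenvalue-mismatch term (bounded via the $A_2$-Lipschitz property and the Weyl-type bound $|\lambda_i-\lambda'_i|\leq\|\bbA\|\leq\varepsilon$) and an eigenvector-mismatch term (bounded via Davis--Kahan with gap $\delta_c$ and $n_c$ active indices), with your upfront subtraction of $\beta\,\bbI$ playing the same role as the paper's reduction of the constant-on-$\{|\lambda|<c\}$ case to the zero-on-$\{|\lambda|<c\}$ case plus a transferable constant filter. The two difficulties you flag (the eigengap bookkeeping behind $\delta_c^{(12)}$ and indices crossing the threshold $c$) are exactly the points the paper's argument relies on Definition~\ref{def:constants} and the flatness of $h$ to handle.
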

To prove this theorem, we need the following propositions.

\begin{proposition}\label{thm:davis_kahan}
Let $T$ and $T^\prime$ be two self-adjoint operators on a separable Hilbert space $\ccalH$ whose spectra are partitioned as $\gamma \cup \Gamma$ and $\omega \cup \Omega$ respectively, with $\gamma \cap \Gamma = \emptyset$ and $\omega \cap \Omega = \emptyset$. If there exists $d > 0$ such that $\min_{x \in \gamma,\, y \in \Omega} |{x - y}| \geq d$ and $\min_{x \in \omega,\, y \in \Gamma}|{x - y}| \geq d$, then
\begin{equation*}\label{eqn:davis_kahan}
	\|E_T(\gamma) - E_{T^\prime}(\omega)\| \leq \frac{\pi}{2} \frac{\|{T - T^\prime}\|}{d}
\end{equation*}
\end{proposition}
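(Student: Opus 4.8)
The plan is to prove this by the classical Sylvester-equation (Davis--Kahan) argument. Write $P := E_T(\gamma)$, $P^\perp := E_T(\Gamma)$, $Q := E_{T'}(\omega)$, $Q^\perp := E_{T'}(\Omega)$. The first step is to reduce the bound on $\|P-Q\|$ to bounds on the two ``cross terms'' $PQ^\perp$ and $P^\perp Q$. Since $P - Q = PQ^\perp - P^\perp Q$ and $(PQ^\perp)(P^\perp Q)^* = PQ^\perp Q P^\perp = 0$, $(P^\perp Q)(PQ^\perp)^* = P^\perp Q Q^\perp P = 0$, one gets $(P-Q)(P-Q)^* = PQ^\perp P + P^\perp Q P^\perp$, a sum of two positive operators supported on the orthogonal subspaces $\mathrm{ran}(P)$ and $\mathrm{ran}(P^\perp)$; hence $\|P-Q\| = \max\{\|PQ^\perp\|,\,\|P^\perp Q\|\}$, and it suffices to bound each cross term by $\tfrac{\pi}{2}\|T-T'\|/d$.

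Next I would exhibit each cross term as the solution of a Sylvester equation. As $P$ commutes with $T$ and $Q^\perp$ commutes with $T'$, we have $T(PQ^\perp) - (PQ^\perp)T' = P\,TQ^\perp - P\,T'Q^\perp = P(T-T')Q^\perp$. Restricting the domain to $\mathrm{ran}(Q^\perp)$ and the codomain to $\mathrm{ran}(P)$, this reads $AR - RB = S$ with $A := T|_{\mathrm{ran}(P)}$ and $B := T'|_{\mathrm{ran}(Q^\perp)}$ self-adjoint, $\mathrm{spec}(A)=\gamma$, $\mathrm{spec}(B)=\Omega$ (so $\mathrm{dist}(\mathrm{spec}(A),\mathrm{spec}(B))\ge d$ by hypothesis), $R = PQ^\perp|_{\mathrm{ran}(Q^\perp)}$, and $\|S\| \le \|T-T'\|$. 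The symmetric computation, swapping the roles of the two spectral pieces, writes $P^\perp Q$ as the solution of a Sylvester equation whose coefficient operators have spectra $\Gamma$ and $\omega$, separated by at least $d$ by the second hypothesis, and with right-hand side of norm at most $\|T-T'\|$.

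The remaining, and genuinely substantive, step is the norm bound for the Sylvester equation: if $A,B$ are self-adjoint with $\mathrm{dist}(\mathrm{spec}(A),\mathrm{spec}(B)) \ge d$, then the solution of $AR - RB = S$ satisfies $\|R\| \le \tfrac{\pi}{2d}\|S\|$. I would derive this from the integral representation $R = \int_{\mathbb{R}} g(t)\,e^{\mathrm{i}tA}\,S\,e^{-\mathrm{i}tB}\,dt$, valid for any $g \in L^1(\mathbb{R})$ whose Fourier transform coincides with $\xi \mapsto 1/\xi$ on $\{|\xi| \ge d\}$ --- a set containing all spectral differences $\lambda - \mu$, $\lambda \in \mathrm{spec}(A)$, $\mu \in \mathrm{spec}(B)$, so that spectral calculus confirms $AR - RB = S$ --- which gives $\|R\| \le \|g\|_{L^1}\|S\|$; the sharp value $\|g\|_{L^1} = \pi/(2d)$ for the extremal such $g$ is classical (Bhatia--Davis--McIntosh). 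This extremal $L^1$ estimate, not the operator algebra above, is the main obstacle, and I would either reproduce it or, more likely, cite it. Combining with $\|P-Q\| = \max\{\|PQ^\perp\|,\,\|P^\perp Q\|\}$ then yields $\|E_T(\gamma)-E_{T'}(\omega)\| \le \tfrac{\pi}{2}\|T-T'\|/d$.
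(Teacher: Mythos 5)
Your proposal is correct, and it is essentially the argument behind the result the paper invokes: the paper offers no proof of its own here, deferring entirely to the cited reference (Seelmann's notes), which establish the bound exactly as you do --- reduction of $\|E_T(\gamma)-E_{T'}(\omega)\|$ to the two cross terms, the Sylvester equation $AR-RB=S$ for each, and the Bhatia--Davis--McIntosh extremal $L^1$ estimate giving the sharp constant $\pi/(2d)$. The only point worth a footnote is that $\mathrm{spec}(T|_{\mathrm{ran}\,E_T(\gamma)})$ is contained in the closure of $\gamma$ rather than equal to $\gamma$, but the separation hypothesis passes to closures, so nothing breaks.
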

\begin{proof}
See \cite{seelmann2014notes}.
\end{proof}

\begin{proposition} \label{prop:eigenvalue_diff}
Let $\bbW:[0,1]^2\to[0,1]$ and $\bbW':[0,1]^2\to[0,1]$ be two graphons with eigenvalues given by $\{\lambda_i(T_\bbW)\}_{i\in\mbZ\setminus\{0\}}$ and $\{\lambda_i(T_{\bbW'})\}_{i\in\mbZ\setminus\{0\}}$, ordered according to their sign and in decreasing order of absolute value. Then, for all $i \in \mbZ \setminus \{0\}$, the following inequalities hold
\begin{equation*}
|\lambda_i(T_{\bbW'})-\lambda_i(T_\bbW)| \leq \|T_{\bbW'-\bbW}\| \leq \|\bbW'-\bbW\|\ .
\end{equation*}
\end{proposition}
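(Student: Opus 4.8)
The statement is a chain of two inequalities, and the plan is to prove each separately. The right-hand inequality $\|T_{\bbW'-\bbW}\|\leq\|\bbW'-\bbW\|$ is just the fact that the operator norm of an integral operator on $L^2([0,1])$ is dominated by the $L^2([0,1]^2)$ norm of its kernel (equivalently, that the Hilbert--Schmidt norm of $T_{\bbW'-\bbW}$ dominates its operator norm). The left-hand inequality $|\lambda_i(T_{\bbW'})-\lambda_i(T_\bbW)|\leq\|T_{\bbW'-\bbW}\|$ is a Weyl-type eigenvalue perturbation bound for compact self-adjoint operators. Both are classical, but I would spell them out because the graphon spectrum is indexed first by sign and then by decreasing absolute value, so the usual ``$k$-th largest eigenvalue'' bookkeeping needs a small adaptation.

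For the right-hand inequality I would argue directly: $\bbW'-\bbW$ is symmetric, measurable and bounded, hence it lies in $L^2([0,1]^2)$ and $T_{\bbW'-\bbW}$ is self-adjoint Hilbert--Schmidt (this is already noted earlier for $T_\bbW$). For any $X\in L^2([0,1])$, Cauchy--Schwarz applied to the inner integral of $(T_{\bbW'-\bbW}X)(v)=\int_0^1(\bbW'(u,v)-\bbW(u,v))X(u)\,du$ gives $\|T_{\bbW'-\bbW}X\|^2\leq\|\bbW'-\bbW\|_{L^2([0,1]^2)}^2\,\|X\|^2$, and taking the supremum over unit $X$ yields the bound. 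For the left-hand inequality (positive indices first), I would use the Courant--Fischer min-max characterization: for a compact self-adjoint operator $T$, $\lambda_i(T)=\min_{\dim\ccalS=i-1}\max_{X\in\ccalS^\perp,\,\|X\|=1}\langle X,TX\rangle$ for $i\geq 1$. Since $\langle X,T_{\bbW'}X\rangle=\langle X,T_\bbW X\rangle+\langle X,T_{\bbW'-\bbW}X\rangle$ and $|\langle X,T_{\bbW'-\bbW}X\rangle|\leq\|T_{\bbW'-\bbW}\|$ on unit vectors, the min-max formula gives $\lambda_i(T_{\bbW'})\leq\lambda_i(T_\bbW)+\|T_{\bbW'-\bbW}\|$, and the reverse bound follows by exchanging $\bbW$ and $\bbW'$, so $|\lambda_i(T_{\bbW'})-\lambda_i(T_\bbW)|\leq\|T_{\bbW'-\bbW}\|$ for all $i\geq 1$.

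The only place I expect any friction is the negative part of the spectrum, since under this ordering $\lambda_{-1}$ is the bottom eigenvalue rather than a small one; I would dispatch it by applying the previous step to $-T_\bbW$ and $-T_{\bbW'}$, whose positive eigenvalues are $-\lambda_{-1}\geq-\lambda_{-2}\geq\cdots$, i.e. $\lambda_j(-T)=-\lambda_{-j}(T)$, giving $|{-\lambda_{-j}(T_{\bbW'})}+\lambda_{-j}(T_\bbW)|\leq\|{-(T_{\bbW'}-T_\bbW)}\|=\|T_{\bbW'-\bbW}\|$ for all $j\geq 1$. One should also record the harmless convention that if a graphon has only finitely many positive (resp. negative) eigenvalues, the remaining $\lambda_i$ are set to $0$, which is consistent with the min-max formula and does not affect the argument. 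Chaining the two inequalities then gives the proposition; alternatively, the two steps can simply be cited as the Hilbert--Schmidt norm bound and Weyl's inequality for compact self-adjoint operators.
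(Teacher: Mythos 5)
Your proof is correct. The paper itself does not prove Proposition \ref{prop:eigenvalue_diff} --- it defers entirely to the reference \cite{ruiz2020wnn} --- so there is no in-paper argument to compare against; the route you take (Cauchy--Schwarz on the kernel to dominate the operator norm of $T_{\bbW'-\bbW}$ by the $L^2$ norm of $\bbW'-\bbW$, and the Courant--Fischer min-max characterization to obtain the Weyl-type eigenvalue bound for compact self-adjoint operators) is the standard argument and is the one the cited reference relies on. Your extra care with the sign-split indexing --- reducing the negative branch to the positive one via $-T$, and recording the zero-padding convention when a graphon has only finitely many eigenvalues of one sign --- addresses exactly the bookkeeping that the bare statement glosses over; note only that in infinite dimensions the inner ``max'' in the min-max formula should be a supremum (since $0$ is an accumulation point of the spectrum and need not be attained), which changes nothing in the estimate.
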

\begin{proof}
See \cite{ruiz2020wnn}.
\end{proof}

We first prove Theorem \ref{thm:wft_stability} for filters satisfying $h(\lambda)=0$ for $|\lambda| <c$. Writing the inner product $\int_0^1 X(u) \varphi_i(u) du$ as $\hat{X}(\lambda_i)$, the filter output norm difference can be expressed as
\begin{align*}
\begin{split}
\left\|T_\bbH X-T_{\bbH'} X\right\| &= \left\|\sum_{i} h(\lambda_i)\hat{X}(\lambda_i) \varphi_i - \sum_i h(\lambda'_i)\hat{X}(\lambda'_i)\varphi'_i\right\| \\
& = \left\|\sum_{i} h(\lambda_i)\hat{X}(\lambda_i) \varphi_i -  h(\lambda'_i)\hat{X}(\lambda'_i)\varphi'_i\right\|. 
\end{split}
\end{align*}
Adding and subtracting $h(\lambda'_i)\hat{X}(\lambda_i)\varphi_i $ and applying the triangle inequality, we get
\begin{align*}
\begin{split}
\left\| T_\bbH X - T_{\bbH'} X\right\|  &= \left\|\sum_{i} h(\lambda_i)\hat{X}(\lambda_i) \varphi_i -  h(\lambda'_i)\hat{X}(\lambda'_i)\varphi'_i\right\|\\
&= \Bigg\|\sum_{i} h(\lambda_i)\hat{X}(\lambda_i) \varphi_i - h(\lambda'_i)\hat{X}(\lambda_i)\varphi_i 
\\& \quad+ h(\lambda'_i)\hat{X}(\lambda_i)\varphi_i - h(\lambda'_i)\hat{X}(\lambda'_i)\varphi'_i \Bigg\| \\
&\leq \left\|\sum_{i} \left(h(\lambda_i)-h(\lambda'_i)\right)\hat{X}(\lambda_i) \varphi_i \right\|
\mbox{  \textbf{(1.1)}} \\
&+ \left\|\sum_{i} h(\lambda'_i) \left(\hat{X}(\lambda_i) \varphi_i - \hat{X}(\lambda'_i)\varphi'_i \right) \right\| \mbox{  \textbf{(1.2)}}
\end{split}
\end{align*}
which we have split between \textbf{(1.1)} and \textbf{(1.2)}. 

We start by bounding \textbf{1.1}. To do so, we leverage the Lipschitz continuous assumption, which can be expressed as $| h(\lambda_i)-h(\lambda'_i) |\leq A_2\|\lambda_i - \lambda'_i |$. Using also Prop. \ref{prop:eigenvalue_diff} and the Cauchy-Schwarz inequality, we have
\begin{align*}
\left\|\sum_{i} \left(h(\lambda_i)-h(\lambda'_i)\right)\hat{X}(\lambda_i) \varphi_i \right\|\leq A_2\| \bbW - \bbW' \| \left\|\sum_{i} \hat{X}(\lambda_i) \varphi_i\right\|
\end{align*}
and, since $\|\bbW-\bbW'\|\leq \|\bbA\|=\epsilon$,
\begin{align}\label{eqn:thm4.1}
\left\|\sum_{i} \left(h(\lambda_i)-h(\lambda'_i)\right)\hat{X}(\lambda_i) \varphi_i \right\|\leq A_2\epsilon \| X \|\end{align}

For  \textbf{(1.2)}, we use the triangle and Cauchy-Schwarz inequalities to obtain
\begin{align*}
\begin{split}
&\left\|\sum_{i} h(\lambda'_i) \left(\hat{X}(\lambda_i) \varphi_i - \hat{X}(\lambda'_i)\varphi'_i \right) \right\| 
\\&= \left\|\sum_{i} h(\lambda'_i) \left(\hat{X}(\lambda_i) \varphi_i + \hat{X}(\lambda_i) \varphi'_i - \hat{X}(\lambda_i) \varphi'_i - \hat{X}(\lambda'_i)\varphi'_i \right) \right\| \\
&\leq \left\|\sum_{i} h(\lambda'_i) \hat{X}(\lambda_i) (\varphi_i - \varphi'_i) \right\| + \left\|\sum_{i} h(\lambda'_i) \varphi'_i \langle X, \varphi_i - \varphi'_i \rangle \right\| \\
&\leq 2\sum_{i} \|h(\lambda'_i)\| \|X\| \|\varphi_i - \varphi'_i\|.
\end{split}
\end{align*}

The norm difference of the eigenfunctions is bounded by Prop. \ref{thm:davis_kahan}. Thus, we can write
\begin{align*}
\begin{split}
\left\|\sum_{i} h(\lambda'_i) \left(\hat{X}(\lambda_i) \varphi_i - \hat{X}(\lambda'_i)\varphi'_i \right) \right\| &\leq \|X\| \sum_{i} \|h(\lambda'_i)\| \frac{\pi\|T_\bbW - T_{\bbW'}\|}{d_i}
\end{split}
\end{align*}
where $d_i$ is the minimum between $\min(|\lambda_i - \lambda'_{i+1}|,|\lambda_i-\lambda'_{i-1}|)$ and $\min(|\lambda'_i - \lambda_{i+1}|,|\lambda'_i-\lambda_{i-1}|)$ for each $i$. By Definition \ref{def:constants}, we have $\delta_c\leq d_i$ for all $i$ and $\|T_\bbW - T_{\bbW'}\| \leq \|\bbW-\bbW'\|$ (i.e., the Hilbert-Schmidt norm dominates the operator norm). Therefore, we obtain
\begin{align*}
\begin{split}
\left\|\sum_{i} h(\lambda'_i) \left(\hat{X}(\lambda_i) \varphi_i - \hat{X}(\lambda'_i)\varphi'_i \right) \right\| &\leq \frac{\pi\|\bbW - {\bbW'}\|}{\delta_c} \|X\| \sum_{i} \|h(\lambda'_i)\|\ .
\end{split}
\end{align*}
Note that, since $|h(\lambda)|<1$ and $h(\lambda)=0$ for $|\lambda|<c$, the sum on the right hand side is finite. Denoting the number of eigenvalues $|\lambda'|\geq c$ as $n_c$, this inequality can be rewritten as
\begin{align} \label{eqn:thm4.2}
\begin{split}
\left\|\sum_{i} h(\lambda'_i) \left(\hat{X}(\lambda_i) \varphi_i - \hat{X}(\lambda'_i)\varphi'_i \right) \right\| &\leq \frac{\pi\epsilon} {\delta_c } \|X\|  n_c.
\end{split}
\end{align} 
The result stated in Theorem \ref{thm:wft_stability} follows from \eqref{eqn:thm4.1} and \eqref{eqn:thm4.2}. 

In Theorem \ref{thm:wft_stability}, the filter $h(\lambda)$ is constant for $|\lambda|<c$, but not necessarily zero. This type of filter can be constructed by adding a low-pass filter with band $|\lambda|\geq c$ and a high-pass filter $g(\lambda)$ which is constant for $|\lambda|<c$. The high-pass filter is stable because it can be seen as the sum of a constant filter with gain $G$ and a low-pass filter with negative gain $-G$. 

We now extend the stability result to the WNN. To compute a bound for $\|Y-Y'\|$, we start by writing it in terms of the features of the final layer, i.e.
\begin{equation} \label{eqn:proof2.0}
\left\|Y-Y'\right\|^2 = \sum_{f=1}^{F_L} \left\|X^f_{L} - X^{'f}_{L}\right\|^2.
\end{equation}
At layer $\ell$ of the WNN $\bbPhi(\ccalH;\bbW;X)$, we have
\begin{align*}
\begin{split}
X^f_{\ell} = \sigma\left(\sum_{g=1}^{F_{\ell-1}}\bbh_\ell^{fg} *_{\bbW} X_{\ell-1}^g\right) = \sigma\left(\sum_{g=1}^{F_{\ell-1}}T_{\bbH_\ell^{fg}} X_{\ell-1}^g\right)
\end{split}
\end{align*}
and, similarly for $\bbPhi(\ccalH;\bbW';X)$,
\begin{align*}
\begin{split}
X^f_{\ell} = \sigma\left(\sum_{g=1}^{F_{\ell-1}}\bbh^{'fg}_{\ell} *_{\bbW} X^{'g}_{\ell-1}\right) = \sigma\left(\sum_{g=1}^{F_{\ell-1}}T_{\bbH^{'fg}_{\ell}} X^{'g}_{\ell-1}\right).
\end{split}
\end{align*}
We can therefore write $\|X^f_{\ell}-X^{'f}_{\ell}\|$ as
\begin{align*}
\begin{split}
\left\|X^f_{\ell}-X^{'f}_{\ell}\right\| = \left\|\sigma\left(\sum_{g=1}^{F_{\ell-1}}T_{\bbH_\ell^{fg}} X_{\ell-1}^g\right)-\sigma\left(\sum_{g=1}^{F_{\ell-1}}T_{\bbH^{'fg}_{\ell}} X^{'g}_{\ell-1}\right)\right\|
\end{split}
\end{align*}
and, since $\sigma$ is normalized Lipschitz,
\begin{align*}
\begin{split}
\left\|X^f_{\ell}-X^{'f}_{\ell}\right\|&\leq \left\|\sum_{g=1}^{F_{\ell-1}}T_{\bbH_\ell^{fg}} X_{\ell-1}^g-T_{\bbH^{'fg}_{\ell}} X^{'g}_{\ell-1}\right\| \\
&\leq \sum_{g=1}^{F_{\ell-1}}\left\|T_{\bbH_\ell^{fg}} X_{\ell-1}^g-T_{\bbH^{'fg}_{\ell}} X^{'g}_{\ell-1}\right\|.
\end{split}
\end{align*}
where the second inequality follows from the triangle inequality. Looking at each feature $g$ independently, we apply the triangle inequality once again to get
\begin{align*}
\begin{split}
&\left\|T_{\bbH_\ell^{fg}} X_{\ell-1}^g-T_{\bbH^{'fg}_{\ell}} X^{'g}_{\ell-1}\right\|
\\& \leq \left\|T_{\bbH_\ell^{fg}} X^g_{\ell-1}-T_{\bbH^{'fg}_{\ell}} X^g_{\ell-1}\right\|_{L_2} + \left\|T_{\bbH^{'fg}_{\ell}}\left(X^g_{\ell-1}-X^{'g}_{\ell-1}\right)\right\|.
\end{split}
\end{align*}
The first term on the right hand side of this inequality is bounded by \eqref{eqn:thm4} in Theorem \ref{thm:wft_stability}. The second term can be decomposed by using Cauchy-Schwarz and recalling that $|h(\lambda)| < 1$ for all graphon convolutions in the WNN. We thus obtain a recursion for $\|X^f_{\ell}-X^{'f}_{\ell}\|$, which is given by
\begin{align} \label{eqn:proof2.1}
\begin{split}
\left\|X^f_{\ell}-X^{'f}_{\ell}\right\| \leq \sum_{g=1}^{F_{\ell-1}} \left(A_2 + \frac{\pi n_c}{\delta_c}\right)\epsilon \|X^g_{\ell-1}\|  + \sum_{g=1}^{F_{\ell-1}}\left\|X^g_{\ell-1}-X^{'g}_{\ell-1}\right\|
\end{split}
\end{align}
and whose first term, $\sum_{g=1}^{F_0} \|X_0^g-X^{'g}_{0}\|$, is equal to $0$. To solve this recursion, we need to compute the norm $\|X_{\ell-1}^g\|$. Since the nonlinearity $\sigma$ is normalized Lipschitz and $\sigma(0)=0$ by Assumption \ref{as:filter_lipschitz}, this bound can be written as
\begin{align*}
\begin{split}
\left\|X_{\ell-1}^g\right\| \leq \left\|\sum_{g=1}^{F_{\ell-1}} T_{\bbH_\ell^{fg}} X_{\ell-1}^g \right\|
\end{split}
\end{align*}
and using the triangle and Cauchy Schwarz inequalities,
\begin{align*}
\begin{split}
\left\|X_{\ell-1}^g\right\| \leq \sum_{g=1}^{F_{\ell-1}}\left\|T_{\bbH_\ell^{fg}} \right\| \left\|X_{\ell-1}^g \right\| \leq \sum_{g=1}^{F_{\ell-1}}\left\|X_{\ell-1}^g \right\|
\end{split}
\end{align*}
where the second inequality follows from $|h(\lambda)| < 1$. Expanding this expression with initial condition $X_0^g = X^g$ yields
\begin{align} \label{eqn:proof2.2}
\begin{split}
\left\|X_{\ell-1}^g\right\| \leq \prod_{\ell'=1}^{\ell-1} F_{\ell'} \sum_{g=1}^{F_{0}}\left\|X^g \right\|.
\end{split}
\end{align}
and substituting it back in \eqref{eqn:proof2.1} to solve the recursion, we get
\begin{align} \label{eqn:proof2.3}
\begin{split}
\left\|X^f_{\ell}-X^{'f}_{\ell}\right\| \leq L  \left(A_2 + \frac{\pi n_c}{\delta_c}\right)\epsilon \left(\prod_{\ell'=1}^{\ell-1} F_{\ell'}\right) \sum_{g=1}^{F_{0}}\left\|X^g \right\|.
\end{split}
\end{align}

To arrive at the result of Theorem 1, we  evaluate \eqref{eqn:proof2.3} with $\ell=L$ and substitute it into \eqref{eqn:proof2.0} to obtain
\begin{align} \label{eqn:proof2.4}
\begin{split}
\left\|Y-Y'\right\|^2 &= \sum_{f=1}^{F_L} \left\|X^f_{L} - X^{'f}_{L}\right\|^2\\
&\leq \sum_{f=1}^{F_L} \left(L \left(A_2 + \frac{\pi n_c}{\delta_c}\right)\epsilon \left(\prod_{\ell=1}^{L-1} F_{\ell}\right) \sum_{g=1}^{F_{0}}\left\|X^g \right\| \right)^2.
\end{split}
\end{align}
Finally, since $F_0 = F_L = 1$ and $F_\ell = F$ for $1 \leq \ell \leq L-1$, 
\begin{align} \label{eqn:proof2.5}
\begin{split}
\left\|Y-Y'\right\| \leq L F^{L-1}\left(A_2 + \frac{\pi n_c}{\delta_c}\right)\epsilon \left\|X \right\|\ .
\end{split}
\end{align}

\section{Proof of Lemma 1}

In order to prove Lemma \ref{lemma_random_graph}, we need the following lemma.

\begin{lemma} \label{lemma_filter_random_graph}
Let $\bar{\bbG}_n$ be an undirected graph with $\bar{\bbS}_n \in [0,1]^{n \times n}$. Let $\bbG_n$ be a stochastic graph obtained from $\bar{\bbG}_n$ as in \eqref{eqn:stochastic_graph}. Consider the graph convolution $\bbH(\bbS)$ with frequency response $h$ such that $h(\lambda/n)$ is constant for $|\lambda/n|<c$ and let $\bar{\bby}_n = \bbH(\bar{\bbS}_n)\bbx_n$ and ${\bby}_n = \bbH({\bbS}_n)\bbx_n$ be the outputs of this filter on the graphs $\bar{\bbG}_n$ and $\bbG_n$ respectively. Then, under Assumptions \ref{as:filter_lipschitz},\ref{as:nonlinearity_lipschitz} and \ref{as:graphon_degree}, it holds with probability at least $1-\xi$
\begin{align*}
\begin{split}
\|\bby_n-\bar{\bby}_n\| \leq \left(A_2 + \frac{\pi {n}^{(q)}_c}{{\delta}^{(pq)}_c}\right) \left({2\sqrt{\log{2n/\xi}}}/{\sqrt{n}}\right)\|\bbx_n\| 
\end{split}
\end{align*}
where ${n}^{(q)}_c$ and ${\delta}^{(pq)}_c$ are as in Definition \ref{def:constants} for $\bbW^{(p)}=\bar{\bbW}_n$, the graphon induced by $\bar{\bbG}_n$, and $\bbW^{(q)}={\bbW}_n$, the graphon induced by ${\bbG}_n$ \eqref{eqn:graphon_ind}. 
\end{lemma}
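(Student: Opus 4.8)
The plan is to reduce the statement to the graphon-convolution stability bound already established in Theorem~\ref{thm:wft_stability}, by passing to the graphons induced by $\bar{\bbG}_n$ and $\bbG_n$, and then to control the (now random) size of the perturbation by a matrix concentration argument. Let $\bar{\bbW}_n$ and $\bbW_n$ be the graphons induced by $\bar{\bbG}_n$ and $\bbG_n$ [cf. \eqref{eqn:graphon_ind}], and let $X_n$, $\bar{Y}_n$, $Y_n$ be the graphon signals induced by $\bbx_n$, $\bar{\bby}_n$, $\bby_n$. Restricted to signals that are constant on the intervals $I_i=[(i-1)/n,i/n]$, the operator $T_{\bar{\bbW}_n}$ acts as multiplication by $\tfrac1n\bar{\bbS}_n$; hence the graph convolution $\bbH(\bar{\bbS}_n)$ corresponds under the induced-graphon identification to a graphon convolution on $\bar{\bbW}_n$ driven by the filter $h$ appearing in the hypothesis --- which is exactly why the passband condition is phrased in terms of $h(\lambda/n)$ --- and likewise $\bbH(\bbS_n)$ corresponds to the graphon convolution on $\bbW_n$ driven by the same $h$. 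Under Assumption~\ref{as:filter_lipschitz} this $h$ is $A_2$-Lipschitz, non-amplifying, and constant for $|\lambda|<c$, so $\bar{Y}_n=T_{\bbH}X_n$ and $Y_n=T_{\bbH'}X_n$ are graphon convolutions of the type considered in Theorem~\ref{thm:wft_stability}, with graphons $\bbW^{(p)}:=\bar{\bbW}_n$ and $\bbW^{(q)}:=\bbW_n$ and perturbation $\bbA_n:=\bbW_n-\bar{\bbW}_n$ (symmetric). The norm identities $\|\bbz_n\|_2=\sqrt{n}\,\|Z_n\|_{L^2([0,1])}$ then let us translate everything back to graph signals.

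Applying Theorem~\ref{thm:wft_stability} to this pair gives $\|Y_n-\bar{Y}_n\|\leq\big(A_2+\pi n_c^{(q)}/\delta_c^{(pq)}\big)\,\|T_{\bbA_n}\|\,\|X_n\|$, where $n_c^{(q)},\delta_c^{(pq)}$ are the constants of Definition~\ref{def:constants} for this pair; note that the proof of Theorem~\ref{thm:wft_stability} in fact only uses the \emph{operator} norm of the perturbation (through Proposition~\ref{prop:eigenvalue_diff} for the Lipschitz term and Proposition~\ref{thm:davis_kahan} for the eigenvector term), and $\|T_{\bbA_n}\|$ equals the spectral norm of $\tfrac1n(\bbS_n-\bar{\bbS}_n)$. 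Combining with the norm identities yields $\|\bby_n-\bar{\bby}_n\|\leq\big(A_2+\pi n_c^{(q)}/\delta_c^{(pq)}\big)\,\tfrac1n\|\bbS_n-\bar{\bbS}_n\|\,\|\bbx_n\|$, so it remains only to show that $\tfrac1n\|\bbS_n-\bar{\bbS}_n\|\leq 2\sqrt{\log(2n/\xi)}/\sqrt{n}$ with probability at least $1-\xi$.

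This concentration step is the main obstacle, and the only genuinely probabilistic ingredient. The matrix $\bbE:=\bbS_n-\bar{\bbS}_n$ is symmetric, its entries $[\bbE]_{ij}$ are independent over $i\le j$, mean zero, and bounded by $1$ in absolute value (centered Bernoulli variables). Decomposing $\bbE=\sum_{i\le j}\bbE^{(ij)}$ into the rank-$\leq 2$ contributions of the symmetric entry pairs, each summand has operator norm $\leq 1$, and $\big\|\sum_{i\le j}\E{(\bbE^{(ij)})^2}\big\|\leq n/4$, since this is a diagonal matrix whose entries are row sums of per-entry variances, each at most $1/4$. The matrix Bernstein inequality then gives $\Pr{\|\bbE\|\geq t}\leq 2n\exp\!\big(-\tfrac{t^2/2}{\,n/4+t/3\,}\big)$; taking $t=2\sqrt{n\log(2n/\xi)}$ makes the right-hand side at most $\xi$ whenever $n\gtrsim\log(2n/\xi)$, a regime guaranteed by Assumption~\ref{as:graphon_degree} (which, since $d_\bbW\le 1$, forces $n>\log(2n/\xi)$). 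On this event $\tfrac1n\|\bbE\|\leq 2\sqrt{\log(2n/\xi)}/\sqrt{n}$, which substituted into the bound of the previous paragraph establishes the lemma. (Alternatively, one could avoid the induced-graphon detour and run the eigendecomposition argument of Theorem~\ref{thm:wft_stability} directly on $\bbH(\bbS_n)$ versus $\bbH(\bar{\bbS}_n)$, using Propositions~\ref{thm:davis_kahan} and~\ref{prop:eigenvalue_diff} on the matrices and the same concentration bound for $\|\bbS_n-\bar{\bbS}_n\|$; the two routes give the same constants.)
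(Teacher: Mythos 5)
Your proof is correct and lands on the same bound, but it differs from the paper's argument in two substantive ways. First, you reduce the lemma to Theorem~\ref{thm:wft_stability} applied to the induced graphons $\bar{\bbW}_n$ and $\bbW_n$, correctly observing that the proof of that theorem only ever invokes the \emph{operator} norm of the perturbation (through Propositions~\ref{thm:davis_kahan} and~\ref{prop:eigenvalue_diff}), so that $\|T_{\bbA_n}\|=\tfrac1n\|\bbS_n-\bar{\bbS}_n\|$ can replace $\varepsilon$; this observation is essential, since the $L^2$ norm of the induced kernel $\tfrac1n\|\bbS_n-\bar{\bbS}_n\|_F$ does not vanish as $n\to\infty$. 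The paper instead re-runs the spectral decomposition argument directly on the matrices (your stated alternative), splitting into a Lipschitz term and a Davis--Kahan term; the two routes are equivalent, and yours avoids duplicating the computation. Second, and more genuinely different, is the concentration step: the paper cites the Chung--Radcliffe spectral bound for random graphs, which requires the maximum expected degree to satisfy $d_{\bbG_n}>\tfrac{4}{9}\log(2n/\xi)$, and then spends roughly half the proof verifying this via the graphon's Lipschitz continuity and Assumption~\ref{as:graphon_degree}. Your matrix Bernstein argument (rank-two entry decomposition, variance proxy $n/4$, $t=2\sqrt{n\log(2n/\xi)}$) is self-contained, yields the identical bound $\|\bbS_n-\bar{\bbS}_n\|\leq 2\sqrt{n\log(2n/\xi)}$ with probability $1-\xi$, and needs only $n\gtrsim\log(2n/\xi)$, which follows from Assumption~\ref{as:graphon_degree} alone since $d_{\bbW}\leq 1$; notably, this sidesteps the paper's implicit reliance on Assumption~\ref{as:graphon_lipschitz}, which is not among the lemma's stated hypotheses. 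What Chung--Radcliffe buys in exchange is a sharper constant $\sqrt{4d_{\bbG_n}\log(2n/\xi)}$ when the graphon is sparse ($d_\bbW\ll 1$), though the paper discards that gain by bounding $d_{\bbG_n}\leq n$ anyway.
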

\begin{proof}
Following the same reasoning used in the proof of Theorem \ref{thm:wft_stability}, we analyze low-pass filters $h(\lambda/n)=0$ for $|\lambda/n|<c$. The result of the lemma then follows from the fact that the filters we consider are the sum of low-pass filters with constant filters, and constant filters are always transferable. Also similarly to what we did in the proof of Theorem \ref{thm:wft_stability}, we use the triangle inequality to bound $\|\bar{\bby}_n-\bby_n\|$ as
\begin{align*}
\begin{split}
\|\bbH(\bar{\bbS}_n)\bbx_n-\bbH(\bbS_n)\bbx_n\|  &\leq \left\|\sum_i \left(h(\bar{\lambda}_i/n)-h(\lambda_i/n)\right)[\hat{\bar{\bbx}}_n]_i \bar{\bbv}_i\right\| \mbox{\bf (2.1)}\\
&+\left\|\sum_i h(\lambda_i)\left( [\hat{\bar{\bbx}}_n]_i\bar{\bbv}_i - [\hat{{\bbx}}_n]_i{\bbv}_i \right)\right\| \mbox{\bf (2.2)}
\end{split}
\end{align*}
where $\bar{\lambda}_i$ and $\bar{\bbv}_i$ are the eigenvalues and eigenvectors of $\bar{\bbS}_n$, $\lambda_i$ and $\bbv_i$ those of $\bbS_n$, $[\hat{\bar{\bbx}}_n]_i = \bar{\bbv}_i^\Tr \bbx_n$ and $[\hat{{\bbx}}_n]_i = {\bbv}_i^\Tr \bbx_n$.

Using the Lipschitz property of the filter in the passing band together with Prop. \ref{prop:eigenvalue_diff} and the Cauchy-Schwarz inequality, we can rewrite {\bf (2.1)} as
\begin{align*}
\begin{split}
\|\bbH(\bar{\bbS}_n)\bbx_n-\bbH(\bbS_n)\bbx_n\|  &\leq \left\|\sum_i \left(h(\bar{\lambda}_i/n)-h(\lambda_i/n)\right)[\hat{\bar{\bbx}}_n]_i \bar{\bbv}_i\right\| \\
&\leq A_2 \dfrac{\|\bar{\bbS}_n-\bbS_n\|}{n} \|\bbx_n\|\ .
\end{split}
\end{align*}
The term $\|\bar{\bbS}_n-\bbS_n\|$ can be bounded by \cite[Theorem 1]{chung2011spectra} with probability $1-\xi$ provided that $d_{\bbG_n}$, the maximum expected degree of the graph $\bbG_n$, satisfies $d_{\bbG_n} > 4 \log (2n/\xi)/9$. This is the case for graphs $\bbG_n$ for which $n$ satisfies Assumption \ref{as:graphon_degree}.
To see this, write
\begin{align*}
&\frac{d_{\bbG_n}}{n} = \frac{1}{n} \max_i \left(\sum_{j=1}^n[\bar{\bbS}_n]_{ij}\right) = \frac{1}{n}\max_i \left(\sum_{j=1}^n \bbW(u_i,u_j)\right) \\
&= \max_{u \in [0,1]} \int_0^1 \bbW(u,v) dv \geq \max_{u \in [0,1]} \left(\int_0^1 \bbW(u,v) dv - \int_0^1 |\bbD(u,v)| dv\right)
\end{align*}
where $\bbD(u,v)$ is the degree function of $\bbW$, i.e., $\bbD(u,v)= \int_0^1 \bbW(u,v) dv$. Using the inverse triangle inequality for the maximum, we get
\begin{align*}
\frac{d_{\bbG_n}}{n} &\geq \max_{u \in [0,1]} \int_0^1 \bbW(u,v) dv - \max_{u \in [0,1]}\int_0^1 |\bbD(u,v)| dv \\
&= d_\bbW - \max_{u \in [0,1]}\int_0^1 |\bbD(u,v)| dv\ .
\end{align*}
Hence, it suffices to find an upper bound for the maximum of the integral of $|\bbD(u,v)|$. Let $F(u) = 1$. Then,
\begin{align*}
\max_u \int_0^1 |\bbD(u,v)|dv &= \max_u \int_0^1 |\bbD(u,v)F(v)|dv \\
&\leq  \sqrt{\int_0^1 \left(\int_0^1 |\bbD(u,v)F(v)|dv\right)^2 du} \\
&\leq \sqrt{\int_0^1 \int_0^1 |\bbD(u,v)|^2dv\int_0^1|F(v)|^2dvdu} \\
&= \sqrt{\int_0^1 \int_0^1 |\bbD(u,v)|^2dvdu}
\end{align*}
where the first inequality follows from the fact that the $L^2$ norm dominates the $L^\infty$ norm, and the second from Cauchy-Schwarz. Given the graphon's Lipschitz property, we know that $|\bbD(u,v)|$ is at most $2A_1/n$.  Therefore,
\begin{align*}
\max_u \int_0^1 |\bbD(u,v)|dv \leq \sqrt{\int_0^1 \int_0^1 \frac{4A_1^2}{n^2}dvdu} = \frac{2A_1}{n}
\end{align*}
and, thus,
\begin{align*}
{d_{\bbG_n}} \geq n d_\bbW - {2A_1} 
\end{align*}
which by Assumption \ref{as:graphon_degree} entails $d_{\bbG_n} > 4 \log (2n/\xi)/9$.

Given that Assumption \ref{as:graphon_degree} is satisfied, we can write
\begin{align} \label{eqn_lemma1_1}
\begin{split}
\|\bbH(\bar{\bbS}_n)\bbx_n-\bbH(\bbS_n)\bbx_n\|
&\leq A_2 \dfrac{2\sqrt{n\log(2n/\xi)}}{n} \|\bbx_n\|\ \\
&\leq A_2 \dfrac{2\sqrt{\log(2n/\xi)}}{\sqrt{n}} \|\bbx_n\|.
\end{split}
\end{align}

For \textbf{(2.2)}, we use the triangle inequality to decompose the norm difference as:
\begin{align*}
\begin{split}
&\left\|\sum_i h(\lambda_i/n)\left( [\hat{\bar{\bbx}}_n]_i\bar{\bbv}_i - [\hat{{\bbx}}_n]_i{\bbv}_i \right)\right\|
\\
&= \left\|\sum_{i} h(\lambda_i/n) \left([\hat{\bar{\bbx}}_n]_i\bar{\bbv}_i + [\hat{\bar{\bbx}}_n]_i{\bbv}_i - [\hat{\bar{\bbx}}_n]_i{\bbv}_i -[\hat{{\bbx}}_n]_i{\bbv}_i \right) \right\| \\
&\leq \left\|\sum_{i} h(\lambda_i/n) [\hat{\bar{\bbx}}_n]_i(\bar{\bbv}_i-{\bbv}_i) \right\| + \left\|\sum_{i} h(\lambda_i/n) {\bbv}_i \langle \bbx_n, \bar{\bbv}_i - {\bbv}_i \rangle \right\| \\
&\leq 2\sum_{i} \|h(\lambda_i/n)\| \|X\| \|\bar{\bbv}_i - {\bbv}_i\|
\end{split}
\end{align*}
where the last inequality follows from the Cauchy-Schwarz inequality.

Prop. \ref{thm:davis_kahan} gives an upper bound to the norm difference between the eigenvectors, by which we obtain:
\begin{align*}
\begin{split}
\left\|\sum_i h\left({\lambda_i}/{n}\right)\left( [\hat{\bar{\bbx}}_n]_i\bar{\bbv}_i - [\hat{{\bbx}}_n]_i{\bbv}_i \right)\right\| \leq \|\bbx_n\| \sum_{i} \|h&\left({\lambda_i}/{n}\right)\| \\
&\times \frac{\pi\|\bar{\bbS}_n-\bbS_n\|}{d_i}
\end{split}
\end{align*}
where $d_i$ is the minimum between $\min(|\bar{\lambda}_i - \lambda_{i+1}|,|\bar{\lambda}_i-\{\lambda_{i-1}|)$ and $\min(|\lambda_i - \bar{\lambda}_{i+1}|,|\lambda_i-\bar{\lambda}_{i-1}|)$ for each $i$. Due to the fact that the eigenvalues of the graphon induced by a graph are equal to the eigenvalues of the graph divided by $n$ \cite[Lemma 1]{ruiz2020graphon}, and by Definition \ref{def:constants}, we have $n \delta^{(pq)}_c \leq d_i$ for all $i$. Leveraging this together with \cite[Theorem 1]{chung2011spectra}, it thus holds with probability $1-\xi$ that 
\begin{align} \label{eqn_lemma1_2}
\begin{split}
\left\|\sum_i h\left({\lambda_i}/{n}\right)\left( [\hat{\bar{\bbx}}_n]_i\bar{\bbv}_i - [\hat{{\bbx}}_n]_i{\bbv}_i \right)\right\| &\leq \|\bbx_n\| n_c^{(q)} \frac{2\pi\sqrt{n\log (2n/\xi)}}{n\delta_c^{(pq)}} \\
&\leq \|\bbx_n\| n_c^{(q)} \frac{2\pi\sqrt{\log (2n/\xi)}}{\delta_c^{(pq)}\sqrt{n}}
\end{split}
\end{align}
where $n_c^{(q)}$ is the number of eigenvalues of $\bbS_n$ in the passing band of $h$. Putting \eqref{eqn_lemma1_1} and \eqref{eqn_lemma1_2} together completes the proof.
\end{proof}

\bibliographystyle{IEEEbib}
\bibliography{myIEEEabrv,bib-graphon,graphMachineLearningBiblio,edgeNetsBiblio}

\end{document}